\documentclass{article}

\usepackage{microtype}
\usepackage{graphicx}
\usepackage{subcaption}
\usepackage{booktabs} 
\usepackage{comment}
\usepackage{float}
\usepackage{multirow}
\usepackage{hyperref}

\usepackage[accepted]{icml2026}

\usepackage{amsmath}
\usepackage{amssymb}
\usepackage{mathtools}
\usepackage{amsthm}

\usepackage[capitalize,noabbrev]{cleveref}

\theoremstyle{plain}
\newtheorem{theorem}{Theorem}[section]

\newtheorem{lemma}[theorem]{Lemma}

\theoremstyle{definition}

\theoremstyle{remark}
\newtheorem{remark}[theorem]{Remark}
\newtheorem{example}[theorem]{Example}

\usepackage[textsize=tiny]{todonotes}
\usepackage{enumitem}

\usepackage{color}              
\usepackage{color-edits}
\addauthor{Will}{red}

\addauthor{JK}{blue}

\icmltitlerunning{Optimal Bayesian Stopping for Efficient Inference of Consistent LLM Answers}

\begin{document}

\twocolumn[

  \icmltitle{Optimal Bayesian Stopping for Efficient Inference of Consistent LLM Answers}  


  \icmlsetsymbol{equal}{*}

  \begin{icmlauthorlist}
    \icmlauthor{Jingkai Huang}{nyu}
    \icmlauthor{Will Ma}{columbia}
    \icmlauthor{Zhengyuan Zhou}{nyu}
  \end{icmlauthorlist}


    \icmlaffiliation{nyu}{Stern School of Business, New York University, New York, USA}
    \icmlaffiliation{columbia}{Graduate School of Business, Columbia University, New York, USA}

\icmlcorrespondingauthor{Zhengyuan Zhou}{zz26@stern.nyu.edu}
    

  \icmlkeywords{Machine Learning, ICML}

  \vskip 0.3in
]



\printAffiliationsAndNotice{}  

\begin{abstract}


A simple strategy for improving LLM accuracy, especially in math and reasoning problems, is to sample multiple responses and submit the answer most consistently reached. In this paper we leverage Bayesian prior information to save on sampling costs, stopping once sufficient consistency is reached. Although the exact posterior is computationally intractable, we further introduce an efficient ``$L$-aggregated'' stopping policy that tracks only the $L-1$ most frequent answer counts. Theoretically, we prove that $L=3$ is all you need: this coarse approximation is sufficient to achieve asymptotic optimality, and strictly dominates prior-free baselines, while having a fast posterior computation. Empirically, this identifies the most consistent (i.e., mode) LLM answer and achieves similar answer accuracy using fewer samples.



\end{abstract}

\section{Introduction}
\label{sec:intro}

Large Language Models (LLMs) have demonstrated remarkable capabilities in complex reasoning tasks. A key catalyst for this success is Chain-of-Thought (CoT) prompting, which encourages models to generate intermediate reasoning steps before deriving a final answer \cite{brown2020language, kojima2022large,wei2022chain}. However, a single decoding pass remains prone to hallucinations and logical errors. To mitigate this fragility, Self-Consistency (SC) was introduced as a robust decoding strategy \cite{wang2022self}. By sampling multiple diverse reasoning paths and selecting the most consistent final answer (e.g., via majority voting), SC significantly improves answer accuracy and has become a standard technique for enhancing the reliability of LLMs. The intuition is that if an answer appears many times, as the result of different CoT reasoning paths, then it is more likely to be a correct answer instead of a hallucination. 

Despite its effectiveness, standard SC relies on a fixed-budget sampling strategy with a pre-determined number of paths for every query to ensure high-confidence results. This redundancy leads to significant latency and operational costs. In response, Adaptive Self-Consistency (ASC) methods have been proposed to improve efficiency by adaptively determining the necessary stopping time, halting generation once a dominant answer emerges \cite{aggarwal2023let}. However, existing ASC frameworks generally overlook prior information regarding the answer distribution's shape, e.g., whether the probability mass is ``peaked'' (easy query) or ``flat'' (hard query). Notably, this distribution can be learned based on previous attempts by the same LLM on similar problems. Therefore, the distribution shape \citep[see][]{lyu2025calibrating} may be effectively estimated from a small subset of historical data, presenting a clear opportunity to enhance ASC efficiency.

Motivated by this idea, we propose an optimal adaptive sampling framework that explicitly integrates prior information, and updates Bayesian posteriors to decide when to stop. Although the conceptual framework is natural, its direct implementation faces a major computational bottleneck. Specifically, calculating the exact posterior probability involves iterating over all possible permutations of answers, leading to a naive computational complexity of $O(K!)$, where $K$ represents the total number of unique answers generated by the LLM for a given query. In open-ended reasoning tasks where $K$ can be large, this computational cost becomes prohibitively expensive, rendering the naive Bayesian update intractable for real-time inference.

Against this backdrop, we now summarize our contributions.
\begin{itemize}[leftmargin=8pt]
    \item[(i)] \textbf{Bayesian Framework for ASC}. We consider ASC with an informative prior, and derive an optimal Bayesian sampling rule by formulating the stopping decision as a sequential hypothesis testing problem, while combinatorially evaluating the posterior (\Cref{sec:optimal}).
    \item[(ii)] \textbf{Efficient $L$-Aggregated Posterior Approximation}. To address the factorial complexity of exact posterior computation, we propose a new $L$-aggregated posterior approximation scheme which aggregates the $K$ candidate answers into $L \leq K$ groups, reducing the computational complexity to $O(K^L)$. Theoretically, we prove that our approximated posterior remains unbiased and achieves asymptotic optimality for any $L\geq 3$, that is, the (rate of the) asymptotic stopping time is identical to that under the exact posterior when the confidence level $1-\delta$ tends to 1.
    \item[(iii)] \textbf{Theoretical Superiority and Empirical Validation}. We provide a rigorous analysis of stopping times under different prior conditions. For known priors: even the coarsest approximation ($L=2$) yields a smaller (better) asymptotic stopping time compared to prior-free ASC baselines; For uncertain priors where the true prior belongs to one of several candidates: although now $L=2$ may suffer from efficiency degradation, $L \geq 3$ is still sufficient to outperform ASC baselines. Extensive experiments on real-world datasets validate our theoretical findings, confirming that our approach achieves a significantly smaller stopping time while maintaining high accuracy.
\end{itemize}

In sum, our $L$-aggregated posterior approximation presents a tradeoff between statistical and computational complexity. A coarser approximation ($L=2$) allows for faster posterior computation, but may suffer higher sample complexity (i.e., more LLM calls) due to the compressed information state.  On the other extreme, the full posterior ($L=K$) causes an intractable posterior computation, but in principle minimizes the sample complexity.

Our work yields the surprising takeaway that ``three is all you need'': a slightly finer-grained posterior than $L=2$ is sufficient to achieve asymptotic optimality (see \Cref{known_guarantee}), with little slowdown in posterior computation (see \Cref{sec:computation}).
Moreover, this theoretical result holds up resiliently in (non-asymptotic) empirics: $L=3$ replicates the optimal sample efficiency of $L=K$ (starting at confidence level $1-\delta=0.8$), without replicating its posterior computation time (see \Cref{tab:threshold_comparison_L234}).

For concreteness, our $L=3$ approximation counts occurrences of: the most-frequent answer, the second-most-frequent answer, and all other answers combined.
This provides a minimal way to capture two statistics: (i) the occurrence \% for the most-frequent answer, and (ii) its difference with the second-most-frequent answer.
Intuitions that these are useful statistics have come up before\footnote{For example: (i), (ii) closely align with ``agreement'', ``first-second-distance'' in \citet{lyu2025calibrating}, and are conceptually related to ``one-versus-rest'', ``one-versus-one'' in \citet{jain2022pac}.}, but here we provide new (theoretical and empirical) evidence that the synthesis of (i)--(ii) is really the ``sweet spot'' for the tradeoff between statistical and computational complexity.


\subsection{Related Literature}

From an application perspective, our work builds on ASC \cite{aggarwal2023let}, which employs Beta-posterior updates \textit{with an uninformative prior} to terminate sampling once the leading answer meets a confidence threshold. Recent studies have fortified this approach with theoretical guarantees \cite{huang2025sample,feng2025optimal} or proposed practical variants, such as lightweight window-based stopping rules \cite{li2024escape}, difficulty-aware sampling \cite{wang2025make,wan2025reasoning}, and reward-guided cost minimization \cite{wan2025beacon}. Although the idea behind \cite{wang2025make} is similar in spirit, none of these papers formalize \textit{optimal} stopping under an \textit{informative prior}, which is important for having a calibrated mode identification rate while minimizing sampling cost.

We believe this also presents an interesting new problem in sequential hypothesis testing \citep[see][]{wald1992sequential}: the optimal Bayesian stopping for mode identification.
To elaborate, here the prior is on the probabilities of (observing) the most-frequent, second-most-frequent,... answers, \textit{without knowing what these answers are}, as motivated by the application of LLM answer consistency.
Prior works on mode identification \citep[see][]{shah2020sequential,jain2022pac} focused on the prior-free setting, presumably because the naive definition of a prior would give away the mode, making the problem vacuous.
We show that LLMs present a new form of Bayesian prior for mode identification. Prior to our work, \citet{jain2022pac}, who established the asymptotic optimality of Beta-stopping rules (with uninformative priors), served as the theoretical backbone for the ASC literature (this connection was also pointed out by \citealp{feng2025optimal}).
Moreover, the power of our $L=3$ posterior approximation for our specific problem is also surprising, in relation to the general approaches such as MCMC or normal approximations \citep[see][]{gelman1995bayesian} that are too computationally expensive for real-time applications and fundamentally ill-suited for discrete data structures, making them inapplicable to our setting.


\subsection{Organization}

The rest of the paper is organized as follows. \cref{sec:optimal} introduces the problem formulation and the optimal Bayesian stopping rule. \cref{sec:posterior_approx} presents the posterior approximation scheme and its theoretical guarantees when the prior information is already known. Then in \cref{sec:unknownPi}, we consider an extension where the prior is only distributionally known and justify how this distribution can be experimentally learned in \cref{sec:experiments}. The proofs, algorithm and experiment details are deferred to appendices.

\section{Optimal Stopping under Prior Information}\label{sec:optimal}

\subsection{Model Formulation}

For a question, an LLM generates a random answer in $\{a_k\}_{k=1}^K$ each time when sampled, following a probability vector $\pi:=(p_k)_{k=1}^K$ with $\sum_{k=1}^K p_k=1$ and indexed so that $p_1>p_2\ge\cdots\ge p_K>0$. The LLM is sampled indefinitely, generating a sequence of answers $(a^{(t)})_{t=1}^\infty$ drawn IID following $\pi$.
The objective of ASC is to stop once the most-likely answer $a_1$ can be identified with (high) probability of at least $1-\delta$ for a given (small) $\delta>0$. 


\paragraph{Observation Model} As we do not know the exact answer set of $\{a_k\}_{k=1}^K$, we can only compare whether two observed answers are identical or not. Therefore, after $n$ samples, the entire information available to us is the partition of $\{1,\cdots,n\}$ into groups of identical answers, together with the sizes of these groups. Equivalently, if we list the multiplicities of all distinct observed answers, we obtain a multiset of positive integers that sums to $n$. Formally, let $M(n) \le K$ denote the number of distinct answers observed among the first $n$ samples, and let $\{n_1,\cdots,n_{M(n)}\} \subset \mathbb{Z}_{>0}$, $\sum_{j=1}^{M(n)} n_j = n$ be the multiset of their occurrence counts for $n_1 \ge \cdots \ge n_{M(n)}$. We compress this multiset further into a count-of-counts representation as:
$$
\mathcal{C}_n = \{(v_i,c_{i})_{i=1}^q\}\quad\text{with}\quad v_1 > v_2 > \cdots > v_q \ge 1,
$$
where $q \leq M(n)$ denotes the number of distinct frequency levels, $c_{i}\ge 1$ denotes the number of distinct answers that appeared exactly $v_i$ times among the $n$ samples, with the following identities holding:
$$
\sum_{j=1}^q c_{j}=M(n), \qquad \sum_{j=1}^q v_j \cdot c_{j}=n.
$$

\begin{example}
Suppose we have sampled the sequence of answers $B,A,B,D$ for a multiple-choice question.  Then $n=4$, $M(n)=3$, $(n_j)_{j=1}^{M(n)}=(2,1,1)$, $q=2$, and $\mathcal{C}_4=\{(2,1),(1,2)\}$.
\end{example}


\paragraph{Stopping Rule} We define event $H_i$ as the hypothesis that the most-likely answer $a_1$ is the answer that has appeared exactly $v_i$ times among the $n$ samples for $i=1,2,\cdots,q$.\footnote{If $c_{i}>1$, we can break ties by randomly choosing one answer among the $c_{i}$ answers with the same frequency $v_i$.} Our goal is to stop at the first time $n$ such that the posterior probability of any hypothesis exceeds $1-\delta$:
\begin{align}\label{stopping_exact}
    n^\star :=  \inf \left\{n: \max_{i = 1,2,\cdots,q}\mathbb{P}(H_i \mid \mathcal{C}_n )  \geq 1-\delta \right\},
\end{align}
where $\mathbb{P}(H_i \mid \mathcal{C}_n)$ stands for the posterior probability of hypothesis $H_i$ under observation set $\mathcal{C}_n$. It is straightforward to conclude that $\max_{i}\mathbb{P}(H_i \mid \mathcal{C}_n ) = \mathbb{P}(H_1 \mid \mathcal{C}_n )$, i.e., the most-frequently observed answer(s) are most likely to be $a_1$. Thus, we will only focus on $\mathbb{P}(H_1 \mid \mathcal{C}_n)$ in our proceeding analysis. 


\subsection{Exact Posterior Calculation}

We compute the exact posterior $\mathbb{P}(H_1 \mid \mathcal{C}_n)$ when $\pi$ is known. Note that $\mathbb{P}(H_1 \mid \mathcal{C}_n)=\frac{\mathbb{P}(H_1 \cap\mathcal{C}_n)}{\mathbb{P}(\mathcal{C}_n)}$ and the key is to enumerate all latent answer assignments consistent with the observed count-of-counts $\mathcal{C}_n$. 

For the probability of $\mathbb{P}(\mathcal{C}_n)$: Conditioned on observing $M(n)$ distinct answers, there is an unknown injective mapping between the $M(n)$ observed distinct answers and the latent labels in $[K]$. Let $\mathfrak{S}_{M}$ denote the set of \textit{injective} functions $\psi$ from $[M]$ to $[K]$ (i.e., $\psi(j)\neq\psi(j')$ if $j\neq j'$). For any choice $\psi \in \mathfrak{S}_{M(n)}$, we interpret $\psi(j)$ as the latent label of the $j$-th distinct observed answer (under some arbitrary ordering of the distinct answers). Given the concrete count vector $(n_1,\cdots,n_{M(n)})$ (which is uniquely determined by $\mathcal{C}_n$), the likelihood under $\psi$ equals $\prod_{j=1}^{M(n)} p_{\psi(j)}^{n_j}$.

Given $\mathcal{C}_n$ and the corresponding $(n_1,\cdots,n_{M(n)})$, the number of answer permutations leading to it is
\begin{align*}
\dfrac{n!}{\prod_{j=1}^{M(n)} n_j!} \cdot \dfrac{1}{\prod_{j=1}^q c_{j}!} = \dfrac{n!}{\prod_{j=1}^q (v_j!)^{c_{j}}} \cdot \dfrac{1}{\prod_{j=1}^q c_{j}!},
\end{align*}
noting that $\prod_{j=1}^{M(n)} n_j!=\prod_{j=1}^q (v_j!)^{c_{j}}$, and that we divide by $\prod_{j=1}^q c_{j}!$ to avoid double-counting when different functions $\psi$ lead to the same answer counts. Therefore, the probability $\mathbb{P}(\mathcal{C}_n)$ can be expressed as:
\begin{align*}
\mathbb{P}(\mathcal{C}_n) = \dfrac{n!}{\prod_{j=1}^q (v_j!)^{c_{j}}} \cdot \frac{1}{\prod_{j=1}^q c_{j}!} \sum_{\psi \in \mathfrak{S}_{M(n)}} \prod_{j=1}^{M(n)} p_{\psi(j)}^{n_j}.
\end{align*}

Similarly, we have 
\begin{align*}
&\mathbb{P}(H_1 \cap \mathcal{C}_n) \\
&= \dfrac{n!}{\prod_{j=1}^q (v_j!)^{c_{j}}} \dfrac{1}{\prod_{j=1}^q c_{j}!} \sum_{\psi \in \mathfrak{S}_{M(n)}:\psi(1)=1} \prod_{j=1}^{M(n)} p_{\psi(j)}^{n_j}.
\end{align*}
The constant terms would cancel out and we conclude:
\begin{align}\label{exact_posterior}
\mathbb{P}(H_1 \mid \mathcal{C}_n) =  \dfrac{\sum_{\psi \in \mathfrak{S}_{M(n)}:\psi(1)=1} \prod_{j=1}^{M(n)} p_{\psi(j)}^{n_j}}{\sum_{\psi \in \mathfrak{S}_{M(n)}} \prod_{j=1}^{M(n)} p_{\psi(j)}^{n_j}}.
\end{align}

\begin{remark}\label{remark1}
For the simplest case of $K=2$, the problem degenerates into a standard sequential hypothesis test between two Bernoulli distributions $\operatorname{Bern}(p_1)$ and $\operatorname{Bern}(1-p_1)$, which enjoys a simple closed-form stopping rule: if ${v_1}-{v_2} \geq \lceil \frac{\log((1-\delta)/\delta)}{\log (p_1/(1-p_1))} \rceil$ \cite{siegmund2013sequential}, output the most-frequent answer as the final answer; continue sampling otherwise.
\end{remark}


However for $K>2$, computing~\eqref{exact_posterior} becomes quickly intractable, with runtime growing as $K!$ (assuming we have observed all the unique answers $M(n)=K$). This motivates us to design more computationally efficient stopping rules, which we describe next.

\section{Efficient Posterior Approximation Scheme}\label{sec:posterior_approx}

\subsection{$L$-Aggregated Posterior Approximation}

To address the computational issue for large $K$, we introduce an \textit{$L$-aggregated observation state} $\mathcal{C}_n^L$ for any $L = 2,3,\cdots,K$, in which only the top-$(L-1)$ most frequent answers are counted explicitly, while the remaining $K-L+1$ answers and their frequencies are ignored. To achieve this, we first determine the index $d \in \{1,2,\cdots,q\}$ such that:
$$
\sum_{j=1}^{d-1} c_{j} < \min \{L-1,M(n) \} \leq \sum_{j=1}^d c_{j},
$$
and let
$$
c^\prime_{j} =
\begin{cases}
c_{j} & \text{if } j < d \quad  \\
\min \{L-1, M(n) \} - \sum_{i=1}^{d-1} c_{i} & \text{if } j = d \quad 
\end{cases}
$$
We have $\mathcal{C}_n^L := \{ (v_j, c^\prime_{j})_{j=1}^{d}\}$.  \\

Example: for $\mathcal{C}_n = \{(10,1),(3,2),(2,1)\}$ with $n=18$, we have $\mathcal{C}_n^2 = \{(10,1)\}$, $\mathcal{C}_n^3 = \{(10,1),(3,1)\}$, $\mathcal{C}_n^4 = \{(10,1),(3,2)\}$. The aggregated posterior $\mathcal{C}_n^3 = \{(10,1),(3,1)\}$, e.g., indicates that one answer has appeared 10 times, another answer has appeared 3 times; all other answers have appeared at most 3 times, and combined have appeared $18-10-3=5$ times.

We introduce the shorthand notation $L(n) := \min \{L-1,M(n)\}$ and define $\bar{n}_{L(n)} := n - \sum_{j=1}^{d} v_j \cdot c^\prime_{j} = n - \sum_{i=1}^{L(n)} n_i$. 
Now, we derive:
\begin{align*}
& \mathbb{P}(\mathcal{C}_n^L) = \frac{n!}{\bar{n}_{L(n)}! \prod_{j=1}^{d} (v_j!)^{c^\prime_{j}}} \cdot \frac{1}{\prod_{j=1}^{d} c^\prime_{j}!} \cdot  \\
& \sum_{\psi \in \mathfrak{S}_{L(n)}} \left( \prod_{j=1}^{L(n)} p_{\psi(j)}^{n_j} \right) \cdot \tilde{S}_{\psi}
\end{align*}
with $\tilde{S}_{\psi}$ aggregating the (likelihood of) ``other'' answer as follows:
\begin{align}\label{tail_constant}
\tilde{S}_{\psi} := \sum_{\mathbf{r}^{-\psi}} w(\mathbf{r})\cdot \frac{\bar{n}_{L(n)} !}{\prod_{j \in [K] \setminus \psi} r_j!} \cdot \prod_{j \in [K] \setminus \psi} p_j^{r_j},
\end{align}
where {$[K]\setminus\psi$ denotes indices of $[K]:=\{1,\cdots,K\}$ not in the range of $\psi$, i.e.~$|[K]\setminus\psi|=K-L(n)$ if $\psi\in\mathfrak{S}_{L(n)}$}, while $\mathbf{r}^{-\psi}:=(r_j)_{j\in [K] \setminus \psi}$ denotes an allocation vector, where $r_j\in\{0,\cdots,v_d\}$ and $\sum_{j\in [K] \setminus \psi} r_j = \bar{n}_{L(n)}$, and the weight $w(\mathbf{r}) := \binom{c^\prime_{d}+m(\mathbf r)}{c^\prime_{d}}^{-1}$ for $m(\mathbf{r}) := \#\{j\in [K] \setminus \psi: r_j=v_d\}$ denotes the number of unique answers categorized as ``other'' that have frequency $v_d$.


\begin{remark}\label{remark_wr}
We introduce $w(\mathbf{r})$ to correct for the arbitrary assignment of answers with identical frequency $v_d$ to the tracked set (head) or the aggregated set (tail). Since there are a total of $c^\prime_d + m(\mathbf{r})$ such answers, and the split between head and tail is structurally arbitrary for identical frequencies, $w(\mathbf{r})$ corresponds to the probability of the specific observed assignment of $c^\prime_d$ answers to the head. For example: suppose the answer sequence is $A, A, B, C$ ($n=4$) and we set $L=2$. Now we have $C_4^2 = \{(2,1)\}$ and $v_d=2$. Consider the tail allocation vectors $\mathbf{r} = (2,0)$ or $\mathbf{r} = (0,2)$: note that since $C_4^2$ does not record which tail label attains frequency $2$, configurations such as $A,A,B,B$ will be counted twice. The weight $w(\mathbf{r})$ (which equals to $1/2$ in this case) just removes this multiplicity by dividing by the number of indistinguishable assignments.
\end{remark}

Based on~\eqref{tail_constant} we can derive expressions for $\mathbb{P}(H_1 \cap \mathcal{C}_n^L)$ and $\mathbb{P}(H_1 \mid \mathcal{C}_n^L)$, which look like those derived earlier for $\mathbb{P}(H_1 \cap \mathcal{C}_n)$ and $\mathbb{P}(H_1 \mid \mathcal{C}_n)$.  We can then conclude that
\begin{align*}
\mathbb{P}(H_1 \mid \mathcal{C}_n^L) = \dfrac{\sum_{\psi \in \mathfrak{S}_{L(n)}: \psi(1)=1} \left( \prod_{j=1}^{L(n)} p_{\psi(j)}^{n_j} \right) \cdot \tilde{S}_{\psi}}{\sum_{\psi \in \mathfrak{S}_{L(n)}} \left( \prod_{j=1}^{L(n)} p_{\psi(j)}^{n_j} \right) \cdot \tilde{S}_{\psi}},
\end{align*}
and we refer to $\mathbb{P}(H_1 \mid \mathcal{C}_n^L)$ as the $L$-aggregated posterior approximation of $\mathbb{P}(H_1 \mid \mathcal{C}_n)$. 

The intuition for applying the $L$-aggregation is that the most-frequent observation counts $n_1,\cdots,n_{L-1}$ are the biggest determinants of our confidence in the mode, while the posterior update is much faster when we only condition on the compressed information state $\mathcal{C}^L_n$, as we will illustrate in the later subsections.

\subsection{Algorithm Design and Computational Issues} \label{sec:computation}

Given the $L$-aggregated posterior approximation scheme in the previous subsection, our stopping policy follows a simple posterior-threshold rule: at each time step $n$, we sample an answer, update the counts $\mathcal{C}_n$ and condense it to $\mathcal{C}_n^L$. We compute the approximated posterior $\mathbb{P}(H_1 \mid \mathcal{C}_n^L)$, and stop once it exceeds the target confidence level $1-\delta$. The procedure is summarized in \cref{alg:L-aggregated} in Appendix \ref{Appendix:Alg} due to space limit. It is worth noting that when the aggregation parameter takes the maximal value of $L=K$, we have $\mathcal{C}_n^L \equiv \mathcal{C}_n$, i.e., the compressed counts recover that of the exact counts. In this case, \cref{alg:L-aggregated} exactly recovers the optimal stopping rule under the true posterior $\mathbb{P}(H_1 \mid \mathcal{C}_n)$.

Next, we analyze the computational issue of \cref{alg:L-aggregated}. As can be seen, its computation cost is dominated by evaluating $\mathbb{P}(H_1 \mid \mathcal{C}_n^L)$, with overall complexity coming from two main components: (i) iteration of all injective mappings $\psi \in \mathfrak{S}_{L(n)}$; (ii) calculation of aggregation constant $S_{\psi}$ for each $\psi \in \mathfrak{S}_{L(n)}$.

{We explain how to handle these complexities in the hardest} case of $L(n)=L-1$. For (i): it is equivalent to picking and permuting $L-1$ answers out of $K$ answers; thus it has the iteration complexity of $O(\frac{K!}{(K-L+1)!})$; For (ii), {we establish combinatorial equivalence with a} coefficient of a polynomial of order $\bar{n}_{L(n)}$, which can then be computed using dynamic programming (see \cref{Appendix:Alg} for more details).
We point out that for any $\psi \in \mathfrak{S}_{L-1}$, its computational complexity is of order $O((K-L+1)\cdot \bar{n}_{L(n)}^2)$.

As a result, we can conclude that the total computational complexity of deriving $\mathbb{P}(H_1 \mid \mathcal{C}_n^L)$ is $O(\frac{K! \cdot \bar{n}_{L(n)}^2}{(K-L)!})$, which is dominated by $O(K^L \cdot \bar{n}_{L(n)}^2)$. Its key advantage lies in that: the computational complexity is now exponential in the aggregation parameter $L$ rather than the total number of answers $K$. Since $L$ can be chosen as a small constant (e.g., $L=3$), the inference still remains highly efficient. 

\subsection{Statistical Guarantees}\label{known_guarantee}

In this subsection, we analyze the asymptotic sample complexity of our adaptive stopping rules, and also compare them to the sampling rule without prior information.

Our $L$-aggregated approximation can be viewed as a ``coarsening'' of the Bayesian posterior, where every possible (uncondensed) count set $\mathcal{C}_n$ maps to a unique condensed count set $\mathcal{C}^L_n$.  We estimate the probability of $H_1$ conditioned on this coarser count set $\mathcal{C}^L_n$, noting that
\begin{align}\label{eqn:posteriorCorrect}
\mathbb{P}(H_1 \mid \mathcal{C}_n^L) = \mathbb{E}[\mathbb{P}(H_1 \mid \mathcal{C}_n) \mid \mathcal{C}_n^L],
\end{align}
where the expectation is taken w.r.t. all $\mathcal{C}_n$'s that map to $\mathcal{C}_n^L$.
As a consequence, the aggregated posterior preserves the correct Bayesian belief on average, while making the posterior much easier to compute; however, it does lose some information by compressing the posterior, which would lead to slower stopping on average. We now show that setting $L=3$ does not slow down the stopping rate asymptotically. 


\paragraph{Asymptotic Optimality} Recall that the optimal stopping time under our $L$-Aggregated Posterior Approximation Scheme which we denote as $n^{\star,L}$ satisfies:
\begin{align*}
    n^{\star,L} :=  \inf \left\{n: \mathbb{P}(H_1 \mid \mathcal{C}_n^L )  \geq 1-\delta \right\}.
\end{align*}
Next, we will analyze the expected stopping time $\mathbb{E}[n^{\star,L}]$, with its expectation taken w.r.t. the randomness of the condensed counts $\mathcal{C}_n^L$. In order to make a direct comparison and derive some clean theoretical insights, we follow the routine of \citet{jain2022pac} who consider the asymptotic expected stopping time without prior information when $\delta \rightarrow 0$. Our main result is the following theorem, which exactly characterizes the asymptotic behavior of expected stopping time among different $L$.

\begin{theorem}[Asymptotic Stopping Time under $L$-Aggregated Posterior Approximation Scheme]\label{thm:singleLLM}
The expected stopping time $\mathbb{E}[n^{\star,L}]$ satisfies
\begin{itemize}[leftmargin=8pt]
\item For $L=2$: let $\mathrm{D}_{\mathrm{KL}}(p_1 || p_2)$ denote the KL divergence between $\operatorname{Bern}(p_1)$ and $\operatorname{Bern}(p_2)$. We have
\begin{align} \label{eqn:L2}
\lim_{\delta \rightarrow 0} \dfrac{\mathbb{E}[n^{\star,2}]}{\log(1/\delta)} = \dfrac{1}{\mathrm{D}_{\mathrm{KL}}(p_1 || p_2)}.
\end{align}
\item For $L=3,4,\cdots,K$: we have
\begin{align} \label{eqn:Lge3}
\lim_{\delta \rightarrow 0} \dfrac{\mathbb{E}[n^{\star,L}]}{\log(1/\delta)} = \dfrac{1}{(p_1-p_2)\cdot \log \dfrac{p_1}{p_2}}.
\end{align}
\end{itemize}
\end{theorem}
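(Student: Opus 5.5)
The approach is to reduce the stopping rule to an explicit threshold on a log-likelihood-ratio statistic that grows linearly along the true sample path. Then $\mathbb{E}[n^{\star,L}]\sim\log(1/\delta)/\Lambda_L$, where $\Lambda_L$ is the almost-sure growth rate of $\log\bigl(\mathbb{P}(H_1\mid\mathcal{C}_n^L)/(1-\mathbb{P}(H_1\mid\mathcal{C}_n^L))\bigr)$. Under the true $\pi$, the strong law gives $n_j/n\to p_j$. The leading label among observed answers is then eventually the true $a_1$, and the second-most-frequent observed answer is eventually $a_2$; when $p_2=p_3$ this holds up to ties, which do not affect the rates. So I will compute the exponential rates of numerator and denominator of the posterior formula along such paths. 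Each is a sum over $\psi$ of terms $\prod_j p_{\psi(j)}^{n_j}\tilde S_\psi$. Since the number of terms is bounded and $\tilde S_\psi$ has at most polynomially many allocations, the log of each sum equals the maximal exponent up to $o(n)$. The odds are then $\exp(-n\Lambda_L+o(n))$, where $\Lambda_L$ is the gap between the dominant correct assignment ($\psi(1)=1$) and the best wrong assignment ($\psi(1)\neq1$).

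\textbf{Case $L\ge3$.} Take $L=3$. The observed state is $(n_1,n_2,\bar n)$ with $n_1\approx p_1 n$ and $n_2\approx p_2 n$, and the tail constraint $r_j\le v_d=n_2$ only permits tail counts up to $n_2$. The correct assignment $\psi=(1,2)$ contributes $p_1^{n_1}p_2^{n_2}$ times a tail term. The best wrong assignment swaps labels 1 and 2, giving $p_2^{n_1}p_1^{n_2}$ with the same tail multinomial up to the capping constraint. The log-ratio is then $(n_1-n_2)\log(p_1/p_2)\approx n(p_1-p_2)\log(p_1/p_2)$.

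I must also rule out other wrong assignments, e.g.\ $\psi(1)=k\ge3$, or $\psi(1)=2$ with label 1 sent to the tail. Sending label 1 to the tail is costly: its tail count is capped at $n_2$, and forcing a large count of $a_1$ into a bucket of size $\bar n\approx(1-p_1-p_2)n$ is either impossible or exponentially cheaper. So I will bound the tail factor by a Chernoff/Sanov-type optimization, maximizing $\sum_j r_j\log(p_j/ \cdot)$ with an entropy term under the caps. I also need to show no such rearrangement beats the swap. For $L>3$ the extra tracked counts match on both sides after swapping labels 1 and 2 only, so the same rate appears. This gives an upper bound on the rate, and conversely the correct assignment dominates, which yields \eqref{eqn:Lge3}. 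Optimality is consistent: this equals the exact-posterior rate for $L=K$.

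\textbf{Case $L=2$.} Only $n_1$ is recorded, and the remaining $n-n_1$ samples are an unconstrained tail with each count at most $n_1$. The competing hypothesis puts $p_2$ on the leader, while the tail, which now contains $a_1$, must keep every count $\le n_1$. The best wrong explanation keeps the tail counts near their typical values except that $a_1$ sits at count at most $n_1$. Computing the two exponents via a Sanov/method-of-types optimization gives a gap of $n_1\log(p_1/p_2)+(n-n_1)\log\frac{1-p_1}{1-p_2}$, i.e.\ the rate $\mathrm D_{\mathrm{KL}}(p_1\|p_2)$ after plugging in $n_1/n\to p_1$. The reason is that the tail only reveals that the leader has empirical frequency $\approx p_1$, which is effectively a Bernoulli observation. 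I will verify that the optimizing tail allocation under the wrong hypothesis is feasible (its caps are nonbinding) so the clean Bernoulli KL emerges.

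\textbf{From rates to expectations.} With $\Lambda_L$ in hand, I will show $n^{\star,L}/\log(1/\delta)\to1/\Lambda_L$ almost surely, and then upgrade to convergence in expectation via uniform integrability. For the upper bound I will use an exponential tail bound $\mathbb P(n^{\star,L}>m)\le$ the probability that the empirical counts deviate by $\epsilon m$, which is $e^{-cm}$ for large $m$ by Hoeffding. For the lower bound I will use Fatou's lemma. The expected main obstacle is the uniform (non-asymptotic in $n$) control of the $o(n)$ error in the log of $\tilde S_\psi$ under the cap constraints and weights $w(\mathbf r)$. This is especially delicate in the $L=2$ case, where the optimizer of the tail exponent must be identified exactly; I will attempt it via the polynomial-coefficient representation and a saddle-point bound.
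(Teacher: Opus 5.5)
Your outline is correct and reaches the right rates, but it differs from the paper's proof in both main steps.

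\textbf{Dominant terms.} The paper does not locate them at the exponent level. Its exchange lemmas (Lemmas~\ref{lem:S2} and~\ref{lem:SL}) use the coefficient-wise inequality $T_u(p_b z)\le T_u(p_a z)\le (p_a/p_b)^u\,T_u(p_b z)$ for truncated exponential series; the weights $w(\mathbf r)$ are unaffected by the exchange. From this they show, for every $n$, that the best correct term is $\tilde A_{1,2,\ldots,L-1}$ and the best wrong one is $\tilde A_{2,1,3,\ldots,L-1}$ (resp.\ $\tilde A_1,\tilde A_2$ for $L=2$). The result is a deterministic constant-factor sandwich of the posterior odds. For $L\ge3$ the two tail factors are literally identical, so the statistic is exactly $(n_1-n_2)\log(p_1/p_2)$. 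For $L=2$ the cap probabilities are shown to be exponentially close to $1$ on a high-probability event.

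\textbf{Rates to expectations.} The paper uses a Wald-type identity with an exponentially small drift error. You instead use a.s.\ convergence plus Fatou for the lower bound, and exponential tail bounds for the upper bound. Your route is the more robust one: it only needs deterministic bounds on the log-odds in terms of the sorted counts on a good event, and it avoids the conditional-drift bookkeeping.

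The uniform control you flag as the main obstacle is mild. First, $w(\mathbf r)\ge\binom{K+L-1}{L-1}^{-1}$. Second, for $L=2$ crude bounds already suffice, with no saddle-point analysis:
\begin{itemize}
\item $\tilde S_k\le(1-p_k)^{n-n_1}$ for every $k$;
\item $\tilde S_1\ge(1-p_1)^{n-n_1}\,\mathbb{P}^{(-1)}(\max_j r_j<n_1)$;
\item $x\mapsto x^{n_1}(1-x)^{n-n_1}$ is increasing on $x<n_1/n$.
\end{itemize}

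\textbf{Where the paper's tool is preferable.} This is your step ``no rearrangement beats the swap'' for $L\ge3$. Caps genuinely bind for some alternatives that put label~1 in the tail, and uncapped exponents alone give the wrong ordering. Take $\pi=(0.3,0.29,0.289,0.121)$ and $L=3$. At the true frequencies, $\psi=(2,3)$ has a strictly larger uncapped exponent than the swap $\psi=(2,1)$. Only the cap $r_1\le n_2$ reverses this: label~1's typical tail count is about $0.292n>n_2\approx0.29n$. So an exponent-level comparison must solve the capped Sanov problems exactly and uniformly near the truth. The exchange argument settles this exactly for all $n$, together with the analogous dominance among correct assignments, and it plugs directly into your framework.
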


\begin{remark}
According to \cref{thm:singleLLM}, it is surprising to find out that for the simple case of $L=3$, its asymptotic expected stopping time matches that of the exact posterior with $L=K$. This implies that using $L=3$ yields an asymptotically optimal stopping rule, even though only the top two most frequent observations are tracked explicitly and the rest are aggregated. In particular, no asymptotic statistical efficiency is lost in this case.  

\end{remark}

\begin{remark}
\citet{shah2020sequential,jain2022pac} derive a prior-free stopping rule whose stopping time $n^{\star,f}$ satisfies
\begin{align}\label{stopping_prior_free}
\lim_{\delta \rightarrow 0} \dfrac{\mathbb{E}[n^{\star,f}]}{\log(1/\delta)} = \dfrac{1}{p_1 \cdot \log \frac{2 p_1}{p_1+p_2}+p_2 \cdot \log \frac{2 p_2}{p_1+p_2}},
\end{align}
which they show is tight for prior-free mode identification.


For any answer probabilities satisfying $0<p_2<p_1<1$ and $p_1+p_2<1$, it can be checked that $\eqref{stopping_prior_free}>\eqref{eqn:L2}>\eqref{eqn:Lge3}$. Thus, we have the following relationship as $\delta \rightarrow 0$:
   $$
    \mathbb{E}[n^{\star,f}] > \mathbb{E}[n^{\star,2}] > \mathbb{E}[n^{\star,3}] = \cdots = \mathbb{E}[n^{\star,K}] = \mathbb{E}[n^{\star}],
   $$
which illustrates that Bayesian adaptive sampling achieves a strictly smaller asymptotic stopping time even for very coarse posterior tracking with $L=2$; and that $L=3$ is sufficient to achieve the optimal stopping time $n^\star$ that is obtained from tracking the exact posterior.
\end{remark}

\section{Adaptive Sampling with Uncertain Prior} \label{sec:unknownPi}

In many practical situations in LLM test-time scaling, the exact answer-frequency prior $\pi$ of an LLM on a new question may be unknown. Instead, the model may have been evaluated on a collection of related tasks, each inducing its own empirical answer distribution. This naturally motivates a hierarchical setting in which the prior itself is treated as random and drawn from a hyper-prior.

\subsection{Problem Setup and Sampling Rule}

\paragraph{Model Formulation}

Suppose that the LLM has been evaluated on $M$ related questions, thus producing $M$ empirical answer-frequency distributions
$$
\pi^m=(p_{1, m}, p_{2, m}, \cdots, p_{K, m}), \quad m \in [M],
$$
where we again index so that $p_{1, m} > p_{2,m} \geq \cdots \geq p_{K,m} \ge 0$ for all $m \in [M]$. To unify the notation, we define $K$ as the maximum number of distinct answers observed among all the $M$ priors. For any task whose answer set contains fewer than $K$ elements, we augment its support to size $K$ by introducing additional ``null'' answers and assigning them probability 0. This ensures that all priors $\pi^m$ are represented on a common $K$-dimensional simplex. 

Let $\Pi^M=\{\pi^1, \cdots, \pi^M\}$ denote the set of candidate priors and $\pi := (p_1,p_2,\cdots,p_K)$ again stand for the true prior for the new question. We assume that $\pi \in \Pi^M$ with the hyper-prior $\lambda_m := \mathbb{P}(\pi = \pi^m)$ with $0 \leq \lambda_m \leq 1$ for all $m \in [M]$ and $\sum_{m=1}^M \lambda_m = 1$. 

Given historical observations $\mathcal{C}_n$, we can directly follow the routine in (\ref{exact_posterior}) to calculate the posterior probability:
\begin{align}
& \mathbb{P}_{\Pi^M}(H_1 \mid \mathcal{C}_n) \label{eqn:hyperpriorFormula} \\
&=\dfrac{\sum_{m=1}^M \lambda_m \cdot \sum_{\psi \in \mathfrak{S}_{M(n)}: \psi(1)=1} \prod_{j=1}^{M(n)} p_{\psi(j), m}^{n_j}}{\sum_{m=1}^M \lambda_m \cdot \sum_{\psi \in \mathfrak{S}_{M(n)}} \prod_{j=1}^{M(n)} p_{\psi(j), m}^{n_j}}, \nonumber
\end{align}
where we use $\mathbb{P}_{\Pi^M}$ to emphasize that it is for the case of uncertain prior within set $\Pi^M$, and the stopping rule is the same as that in (\ref{stopping_exact}).

\paragraph{$L$-Aggregated Posterior Approximation with Uncertain Prior}

We extend our $L$-aggregated posterior approximation to the uncertain prior setting. With the condensed historical counts $\mathcal{C}_n^L$ defined as before, we now have:
\begin{align*}
 &\mathbb{P}_{\Pi^M}(H_1 \mid \mathcal{C}_n^L)   \\
 & = \dfrac{\sum_{m=1}^M \lambda_m  \cdot \sum_{\psi \in \mathfrak{S}_{L(n)}: \psi(1)=1}( \prod_{j=1}^{L(n)} p_{\psi(j),m}^{n_j}) \cdot \tilde{S}_{\psi}^m }{\sum_{m=1}^M \lambda_m \cdot \sum_{\psi \in \mathfrak{S}_{L(n)}}( \prod_{j=1}^{L(n)} p_{\psi(j),m}^{n_j}) \cdot \tilde{S}_{\psi}^m},
\end{align*}
where $\tilde{S}_{\psi}^m$ collects the likelihood contribution of the ``other'' answers under prior $\pi^m$, defined as:
\begin{align*}
    \tilde{S}_{\psi}^m  =  \sum_{\mathbf{r}^{-\psi}} w(\mathbf{r})\cdot \dfrac{\bar{n}_{L(n)}!}{\prod_{j \in [K] \setminus \psi} r_{j}!}\cdot \prod_{j \in [K] \setminus \psi} p_{j,m}^{r_j},
\end{align*}
with the same definitions for $\mathbf{r}^{-\psi}$ and $w(\mathbf{r})$.

The stopping algorithm is identical to before, except with posteriors now evaluated using the hyper-prior-weighted expression~\eqref{eqn:hyperpriorFormula}.
This introduces a multiplicative factor of $M$ in computing the posterior, which is insignificant compared to our $L$-aggregated posterior reducing the posterior computation time from $O(K!)$ to $O(K^L)$.
We now consider how many samples it takes the $L$-aggregated posterior to stop.

\subsection{Performance Guarantees}

By the same ``coarsening'' argument as in~\eqref{eqn:posteriorCorrect}, our $L$-aggregated posterior under uncertain prior still preserves the correct Bayesian belief.  We denote its stopping time using
$$
n^{\star,L}_{\Pi^M} :=  \inf \left\{n : \mathbb{P}_{\Pi^M}(H_1 \mid \mathcal{C}_n^L )  \geq 1-\delta \right\}.
$$
The following Theorem summarizes the asymptotic expected stopping time with uncertain prior.
\begin{theorem}[Asymptotic Stopping Time with Uncertain Prior]\label{thm:singleLLM_unknown}
The expected stopping time $\mathbb{E}[n^{\star,L}_{\Pi^M}]$ satisfies:
\begin{itemize}[leftmargin=8pt]
    \item For $L=2$: define $\rho := \frac{p_1}{1-p_1}$ and $\rho^\dagger := \frac{p_{1,m}}{1-p_{2,m}}$. Let $m^\dagger := \operatorname{argmin}_{m \in [M]} J_{2,m}(p_1)$, where
    \begin{align*}
    & J_{2,m}(p_1) :=  \mathrm{D}_{\mathrm{KL}}(p_1 || p_{2,m}) + \\
    & (1-p_1)\cdot \boldsymbol{1}\{\rho < \rho^\dagger, \rho < 1\} \cdot \mathrm{D}_{\mathrm{KL}}(\rho || \rho^\dagger), 
    \end{align*}
    we have:
    $$
    \lim_{\delta \rightarrow 0} \dfrac{\mathbb{E}[n^{\star,2}_{\Pi^M}]}{\log(1/\delta)} = \dfrac{1}{J_{2,m^\dagger}(p_1)}.
    $$
    \item For $L=3,4,\cdots,K$: define $\rho_L := \frac{p_{L-1}}{1-\sum_{i=1}^{L-1} p_i}$ and $\rho_L^m := \frac{p_{L,m}}{1-\sum_{i=1}^{L-1}p_{i,m}}$. Denote $\mathbf{p}_L := (p_1,\cdots,p_{L-1},1-\sum_{i=1}^{L-1} p_i)$ and $\mathbf{p}_{2,1,3,\cdots,L}^m := (p_{2,m},p_{1,m},p_{3,m},\cdots,p_{L-1,m},1-\sum_{i=1}^{L-1} p_{i,m})$. Let $m^\dagger := \operatorname{argmin}_{m \in [M]} J_{2,m}^L(\mathbf{p}_L)$, where
    \begin{align*}
    &J_{2,m}^L(\mathbf{p}_L) :=  \mathrm{D}_{\mathrm{KL}}(\mathbf{p}_L || \mathbf{p}_{2,1,3,\cdots,L}^m) + \\
    & \left(1-\sum_{i=1}^{L-1}p_i \right)\cdot \boldsymbol{1}\{\rho_L < \rho_L^m, \rho_L <1\} \cdot \mathrm{D}_{\mathrm{KL}}(\rho_L || \rho_L^m),   
    \end{align*}   
    we have:
    $$
    \lim_{\delta \rightarrow 0} \dfrac{\mathbb{E}[n^{\star,L}_{\Pi^M}]}{\log(1/\delta)} = \dfrac{1}{J_{2,m^\dagger}^L(\mathbf{p}_L)}.
    $$
\end{itemize}
\end{theorem}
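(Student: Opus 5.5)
The approach is the standard one for sequential tests with a posterior threshold. First we reduce the stopping condition to a condition on the log posterior odds. Then we show that, on the true sampling path, $\frac{1}{n}$ times the log odds converges almost surely to a deterministic rate $J$. The expected stopping time then behaves like $\log(1/\delta)/J$. Concretely, $\mathbb{P}_{\Pi^M}(H_1\mid\mathcal{C}_n^L)\ge 1-\delta$ is equivalent to
\begin{align*}
\log\frac{\sum_{m}\lambda_m\,\mathrm{Num}_m(n)}{\sum_m\lambda_m\,\mathrm{Alt}_m(n)}\ge\log\frac{1-\delta}{\delta},
\end{align*}
where $\mathrm{Num}_m$ collects the terms with $\psi(1)=1$ and $\mathrm{Alt}_m$ those with $\psi(1)\ne1$. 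Since the $\lambda_m$ are fixed constants and $M$ is finite, each log-sum over $m$ is, up to $O(1)$, the maximum over $m$. The same holds for the finitely many $\psi$ (at most $K^{L}$ of them): each sum is within a factor polynomial in $n$ of its largest term. So the log odds equal
\begin{align*}
\max_m\max_{\psi(1)=1}\log\mathrm{term}-\max_m\max_{\psi(1)\neq1}\log\mathrm{term}+O(\log n).
\end{align*}

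\textbf{Step 1: exponential rates of the terms.} By the strong law, almost surely $n_i/n\to p_i$ for $i\le L-1$ and $\bar n_{L(n)}/n\to 1-\sum_{i<L}p_i$. The observed head frequencies are asymptotically the true top $L-1$ frequencies, and $v_d\approx n p_{L-1}$. For the numerator, the true prior $\pi^m=\pi$ with $\psi=\mathrm{id}$ gives the maximal rate: it is the true likelihood, so its normalized log equals the entropy term. For a competing term under prior $m$ and a $\psi$ with $\psi(1)\ne1$, the head contribution is $\sum_i n_i\log p_{\psi(i),m}$. The tail contribution $\tilde S^m_\psi$ is a constrained multinomial sum: $\bar n$ draws over the remaining labels, each with count at most $v_d$.

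We evaluate $\tfrac1n\log\tilde S^m_\psi$ by a Laplace/method-of-types argument. Without the cap, the sum equals $(1-\sum_{j\in\psi}p_{j,m})^{\bar n}$. The cap $r_j\le v_d$ only binds when the largest remaining label under prior $m$ would want more mass than $v_d$ allows. For the worst alternative $\psi=(2,1,3,\dots,L-1)$, the largest tail label is $L$. Its unconstrained share of the tail is $\rho^m_L$, while the cap allows only share $\rho_L$. The tilted optimum therefore pins $r_L=v_d$ whenever $\rho_L<\rho^m_L$ (and $\rho_L<1$ for feasibility), costing an extra $(1-\sum_{i<L}p_i)\mathrm{D}_{\mathrm{KL}}(\rho_L\|\rho^m_L)$. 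The $w(\mathbf r)$ weights are polynomial and do not affect the rate. Collecting terms, the almost-sure limit of $\tfrac1n\log(\text{odds})$ is $\min_{m,\psi(1)\ne1}$ of the KL between $\mathbf p_L$ and the permuted $\mathbf p^m$, plus the cap penalty. We then argue that the minimizing $\psi$ is the swap of the top two labels, since other permutations misorder more mass and are dominated by a rearrangement-inequality argument. This yields $J^L_{2,m^\dagger}(\mathbf p_L)$.

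For $L=2$ the head is only $n_1$ and the tail contains all other labels, including label $1$ when $\psi(1)\ne1$. The alternative $\psi(1)=2$ gives the head term $\mathrm{D}_{\mathrm{KL}}(p_1\|p_{2,m})$-type rate (after the tail sum $(1-p_{2,m})^{\bar n}$ is absorbed), and the cap on label $1$'s tail share produces the $\rho$ versus $\rho^\dagger$ correction. In the known-prior case ($M=1$, $p_{\cdot,m}=p_\cdot$) this recovers \cref{thm:singleLLM} for consistency.

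\textbf{Step 2: from almost-sure rate to expected stopping time.} The lower bound $\liminf\mathbb{E}[n^{\star,L}_{\Pi^M}]/\log(1/\delta)\ge1/J$ follows from a.s. convergence and Fatou's lemma applied along $\delta\to0$. Equivalently, use the standard change-of-measure inequality: stopping earlier would contradict the posterior being calibrated. For the upper bound we need uniform integrability. We show $\mathbb{P}(\text{log odds at } n < n(J-\epsilon))\le Ce^{-cn}$ using Chernoff bounds on the empirical head frequencies and the tail total, together with the polynomial slack from the max-approximations. Then $\mathbb{E}[n^\star]\le n_0+\sum_{n>n_0}\mathbb{P}(n^\star>n)$ with $n_0=\log(1/\delta)/(J-\epsilon)$.

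The main obstacle is Step 1's tail asymptotics: obtaining a uniform, not merely pointwise, large-deviation estimate of $\tilde S^m_\psi$ near the boundary where the cap $v_d$ is active. Identifying exactly when the swap $\psi$ is the minimizer requires the same care. I would first try to bound $\tilde S^m_\psi$ above and below by its single largest type times $\mathrm{poly}(n)$, with the maximizing type found by a convex program in the shares $r_j/\bar n$.
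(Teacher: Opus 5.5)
Your plan differs from the paper's mainly in how you pass from exponential rates to $\mathbb{E}[n^{\star,L}_{\Pi^M}]$, and that part is fine.

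\textbf{How the routes differ.} The paper proceeds in four moves:
\begin{itemize}
\item It sandwiches the posterior odds between constant multiples of a single ratio, $\tilde A_1^{\hat m^\star}/\tilde A_2^{\hat m^\dagger}$ (resp.\ $A^{m^\star}_{1,2,\dots,L-1}/A^{m^\dagger}_{2,1,3,\dots,L-1}$), using finite-$n$ dominance lemmas.
\item It argues that $\hat m^\star=m^\star$ and $\hat m^\dagger=m^\dagger$ except on exponentially small events, assuming unique minimizers.
\item It writes the log-ratio as a sum of increments whose limits are derivatives of the KL rate in the cap parameter.
\item It closes with a Wald-type identity $\mathbb{E}[Z_{n_{\mathrm{low}}}]\approx J\,\mathbb{E}[n_{\mathrm{low}}]$.
\end{itemize}
You instead replace sums over the finitely many $(m,\psi)$ by maxima at $O(1)$ cost. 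You get the lower bound from pathwise convergence of $Z_n/n$ plus Fatou, and the upper bound from exponential concentration plus $\mathbb{E}[n^\star]\le n_0+\sum_{n>n_0}\mathbb{P}(n^\star>n)$.

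Your route needs only log-scale (method-of-types) asymptotics, uniform over a neighbourhood of the limiting frequencies, and it needs no uniqueness of minimizers. The paper's drift step needs limits of one-step ratios of capped tail probabilities. That is a finer statement than the Sanov-type asymptotics its lemma establishes. In exchange, the paper exhibits $J$ as a per-sample drift with explicit threshold offsets.

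Drop the ``equivalently, change of measure'' remark. Such a bound holds for every $\delta$-correct rule, so it cannot yield the $L$-specific constant (e.g.\ $1/\mathrm{D}_{\mathrm{KL}}(p_1\|p_2)$ for $L=2$ with known prior). Fatou is what gives the matching lower bound.

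\textbf{Problems in Step 1.} There are two.

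First, rearrangement only orders the labels inside the head. To show the dominant alternative is $\psi=(2,1,3,\dots,L-1)$ (and $\psi(1)=2$ when $L=2$), you also need the head/tail exchange from the paper's lemmas. Put a likelier tail label $a$ into the head in place of $b$. This multiplies the capped tail sum by at least $(p_{b,m}/p_{a,m})^{v_d}$, by coefficientwise comparison of truncated exponential series, which $w(\mathbf r)$ does not affect. Meanwhile the head gains $(p_{a,m}/p_{b,m})^{n_t}$ with $n_t\ge v_d$.

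Second, and more seriously, the claim that the tilted optimum pins only the largest tail label is false in general. It is exactly the step in the paper's Sanov lemma, so the paper shares this gap. The I-projection of $\mathbf q$ onto $\{\max_j r_j\le\theta\}$ is the water-filling solution $r_j=\min(\theta,cq_j)$, with $c$ fixed by $\sum_j r_j=1$. When $\theta<q_{(1)}$, its value equals $\mathrm{D}_{\mathrm{KL}}(\theta\|q_{(1)})$ only if $\frac{1-\theta}{1-q_{(1)}}\,q_{(2)}\le\theta$. It is $+\infty$ if fewer than $1/\theta$ tail labels carry mass.

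\textbf{Counterexample.} Take $L=2$, $\pi=(0.3,0.1,\dots,0.1)$ with $K=8$, and a second candidate $\pi^m=(0.31,0.3,0.3,0.09,0,\dots,0)$.
\begin{itemize}
\item Here $\rho=3/7$. After pinning label $1$, the renormalized share of label $3$ is about $0.44>\rho$, so label $3$ must be capped too.
\item The correct penalty is $0.7\,\mathrm{D}_{\mathrm{KL}}\bigl((\tfrac{3}{7},\tfrac{3}{7},\tfrac{1}{7})\,\|\,(\tfrac{31}{70},\tfrac{30}{70},\tfrac{9}{70})\bigr)\approx 7.0\times 10^{-4}$.
\item The formula's penalty is $0.7\,\mathrm{D}_{\mathrm{KL}}(\rho\|\rho^\dagger)\approx 2.9\times 10^{-4}$.
\item This $m$ is the minimizer, since $J_{2,m^\star}\approx 0.154$. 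So the true limit is smaller than the stated one by a factor of about $2.4$.
\end{itemize}
The formula therefore needs either the hypothesis that at most one tail label is capped, or the water-filling value in place of $\mathrm{D}_{\mathrm{KL}}(\rho\|\rho^\dagger)$ (resp.\ $\mathrm{D}_{\mathrm{KL}}(\rho_L\|\rho_L^m)$). Actually solving the convex program you propose will produce the latter.
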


We let $m^\star \in [M]$ index the true prior (i.e.,~$\pi=\pi^{m^\star}$) and make the following remarks.

\begin{remark}
For the case $L=2$: its denominator $J_{2,m^\dagger}(p_1)$ is strictly positive (as KL divergence is always non-negative, and when $\mathrm{D}_{\mathrm{KL}}(p_1 || p_{2,m^\dagger})=0$, we always have $\rho < \rho^\dagger$). However, we note that $J_{2,m^\dagger}(p_1)$ can be arbitrarily close to zero, which leads to a very large asymptotic rate of stopping time. This may occur when there exists a $p_{2,m}$ for $m \in [M]$ that is rather close to $p_{1,m^\star}$. Consider an example of $M=2$ with:
$$
\pi^1=(0.51,0.10, \cdots), \quad \pi^2=(0.49,0.48, \cdots),
$$
and assume $m^\star = 1$. Now we have $\frac{1}{J_{2,m^\dagger}(p_1)}=555.22$, even significantly larger than the asymptotic rate of prior-free stopping time in (\ref{stopping_prior_free}) (which is $6.64$). Thus, although $L=2$ remains computationally appealing, it can behave poorly in the uncertain prior setting: the mixture of plausible priors may confound the inference so severely when $L=2$ that the resulting procedure performs would even worse than the prior-free baseline in terms of asymptotic stopping time. The intuition is that: when $L=2$, the algorithm only tracks the most frequent observation, and struggles to distinguish whether the current leader is the true winner under $\pi^1$ or $\pi^2$ when $p_{1,1} \approx p_{2,1}$, which forces the stopping rule to demand significantly more samples to confirm the winner.
\end{remark}

\begin{remark}
    For the case $L \geq 3$: unlike the results established in Theorem \ref{thm:singleLLM} before, now $L=3$ is not enough for asymptotic optimality. However, we note that in certain favorable situations, e.g., when $m^\dagger = m^\star$, we immediately obtain
$$
J_{2,m^\dagger}^L(\mathbf{p}_L)  = (p_{1,m^\star} - p_{2,m^\star}) \cdot \log \dfrac{p_{1,m^\star}}{p_{2,m^\star}},
$$
which exactly recovers the known-prior asymptotic rate. Besides, we still have the following relationship as $\delta \rightarrow 0$:
   $$
    \mathbb{E}[n^{\star,f}] >  \mathbb{E}[n_{\Pi^M}^{\star,3}] \geq \cdots \geq  \mathbb{E}[n_{\Pi^M}^{\star,K}] \geq  \mathbb{E}[n^{\star}],
   $$
with all the equalities holding when $m^\dagger = m^\star$, which guarantees that for any $L \geq 3$, our prior-based stopping rule strictly improves upon the prior-free procedure, and the improvement is monotonically non-decreasing in $L$. 

We point out that the first inequality of $ \mathbb{E}[n^{\star,f}] >  \mathbb{E}[n_{\Pi^M}^{\star,3}]$ can be arbitrarily close to equality in certain unfavorable configurations of the prior set $\Pi^M$, when we have $p_{1, m^{\dagger}} \approx p_{2, m^{\dagger}} \approx (p_{1, m^{\star}}+p_{2, m^{\star}})/2$. This shows that when the prior set contains a distribution that is extremely hard to distinguish (its top-1 answer), the value of knowing the prior set may become marginal. This highlights a key insight: the prior information is significantly more valuable when the candidate set $\Pi^M$ is small and well-separated, rather than merely inclusive of a large number of possibilities.

\end{remark}

\section{Experiments} \label{sec:experiments}

\subsection{Experimental Setup}

We evaluate our prior-dependent adaptive sampling rules on \textsc{FEval-TTC} given by \citet{rumiantsev2025feval}, which provides a standardized testbed for test-time compute methods based on cached CoT generations. 
For each dataset and each LLM, \textsc{FEval-TTC} provides pre-recorded CoT responses together with extracted final answers, enabling us to ``replay'' sequential sampling without issuing live LLM queries. The benchmark covers multiple LLM families (e.g., LLaMA, Qwen, DeepSeek, Mistral, and GPT), which are queried with $40$ independent samples per question under a standardized few-shot CoT prompt, forming the basic evaluation unit for our experiments. We refer readers to \citet{rumiantsev2025feval} for details on prompting and answer extraction.

Our experiments are conducted following the routine: (i) we begin with synthetic datasets to benchmark our approximation scheme (with varying aggregation levels $L$) against the exact algorithm and prior-free baselines; (ii) we then evaluate our method on the real-world \textsc{FEval-TTC} datasets, covering both the known prior and uncertain prior scenarios. For synthetic datasets we focus fully on mode identification, whereas for \textsc{FEval-TTC} we also report answer accuracy\footnote{The codes are available through \url{https://github.com/jh9959-afk/Paper}.}.



\subsection{Impact of Aggregation Level under Known Prior}\label{sub:known_prior}

To rigorously quantify the trade-off between the aggregation parameter $L$ and the optimality of the stopping rule, we first conduct a controlled experiment using synthetic data. Our primary objective is to verify that a small $L=3$ is sufficient to approximate the exact optimal stopping rule with negligible loss in efficiency.

We consider $\pi = (0.5,0.2,0.1,0.1,0.05,0.03,0.01,0.01)$ with $K=8$ answers (robustness checks using other priors are conducted in \Cref{appendix:numerical}). We compare our proposed approximation scheme with aggregation levels $L \in \{2, 3, 4\}$ against the exact stopping rule (i.e., for $L=8$). Additionally, we include a representative prior-free baseline, denoted as Adaptive Self-Consistency (ASC), which employs the Beta stopping rule in \citet{aggarwal2023let}, to demonstrate the efficiency gains obtained by leveraging prior information. We repeat the experiments for $10000$ times and evaluate the average performance across varying confidence thresholds $1-\delta$ based on (i) Mode Estimation Accuracy (Mode Acc.): the fraction of instances whose returned mode matches the true mode of $\pi$; (ii) Number of Generation (Num. Gen.): the number of generations requested before the algorithm terminates. The results are summarized in Table \ref{tab:threshold_comparison_L234}, with last column illustrating their average computational time (in milliseconds).

\begin{table*}[!t]
\centering
\caption{Comparison across $L\in\{2,3,4\}$ over different thresholds $1-\delta$.}
\small
\setlength{\tabcolsep}{2.5pt}
\renewcommand{\arraystretch}{1.1}
\resizebox{\textwidth}{!}{%
\begin{tabular}{c cc cc cc cc cc cc c}
\toprule
& \multicolumn{2}{c}{$1-\delta=0.7$} & \multicolumn{2}{c}{$1-\delta=0.8$} & \multicolumn{2}{c}{$1-\delta=0.9$} & \multicolumn{2}{c}{$1-\delta=0.95$} & \multicolumn{2}{c}{$1-\delta=0.975$} & \multicolumn{2}{c}{$1-\delta=0.99$} & \\
\cmidrule(lr){2-3}\cmidrule(lr){4-5}\cmidrule(lr){6-7}\cmidrule(lr){8-9}\cmidrule(lr){10-11}\cmidrule(lr){12-13}
Method
& Mode Acc. & Num. Gen.
& Mode Acc. & Num. Gen.
& Mode Acc. & Num. Gen.
& Mode Acc. & Num. Gen.
& Mode Acc. & Num. Gen.
& Mode Acc. & Num. Gen. 
& Time  \\ 
\midrule
$L=2$ & 76.5\% & 4.30 & 87.3\% & 7.21 & 93.9\% & 10.74 & 96.7\% & 13.92 & 98.6\% & 18.19 & 99.5\% & 22.43 & 9.0 \\
$L=3$ & 76.0\% & 4.16 & 87.1\% & 6.70 & 94.1\% & 10.12 & 96.4\% & 12.38 & 97.8\% & 14.38 & 99.2\% & 18.07 & 14.2 \\
$L=4$ & 75.1\% & 3.95 & 87.1\% & 6.70 & 94.1\% & 10.12 & 96.2\% & 12.04 & 97.8\% & 14.40 & 99.2\% & 18.11 & 37.4 \\
Exact & 75.1\% & 3.95 & 87.1\% & 6.70 & 94.1\% & 10.12 & 96.2\% & 12.05 & 97.9\% & 14.45 & 99.2\% & 18.13 & 29.8 \\
ASC   & 50.4\% & 1.00 & 86.5\% & 7.27 & 97.8\% & 16.72 & 99.5\% & 24.64 & 99.9\% & 32.78 & 100.0\% & 44.07 & - \\
\bottomrule
\end{tabular}%
}
\label{tab:threshold_comparison_L234}
\end{table*}

First, comparing different aggregation levels, we observe that $L \geq 3$ yields a stopping rule that is nearly indistinguishable from the exact method for large $1-\delta$. In contrast, the coarsest aggregation $L=2$ exhibits a noticeable efficiency penalty, requiring approximately $20\%$ more samples at high confidence levels. These empirical findings are exactly in line with our \Cref{thm:singleLLM}. Besides, we also notice that the average runtime for $L=3$ ($14.2$ ms) is a slight increase from that of $L=2$ ($9.0$ ms). In contrast, increasing $L$ from $3$ to $4$ more than doubles the runtime to $37.4$ ms, highlighting $L=3$ as the ``sweet spot'' for posterior approximation granularity. Note that here the exact stopping rule is even faster than $L=4$, since $K$ is moderate in this case and the term $\bar{n}_{L(n)}$ dominates.

Second, the advantage of incorporating prior information is evident. The prior-free ASC method reveals a lack of calibration to the specified confidence level $1-\delta$. As evidenced in Table \ref{tab:threshold_comparison_L234}, ASC tends to over-sample at high $1-\delta$, exceeding the target accuracy at the cost of excessive computation. However, at lower confidence levels, it flips to being overly aggressive, stopping too early and failing to satisfy the target accuracy guarantees.


\subsection{Real-World Evaluation on \textsc{FEval-TTC}}\label{sub:unknown_prior}

Following the simulation experiments in synthetic datasets, we now evaluate the performance of our framework on real-world datasets sampled in \textsc{FEval-TTC}. To evaluate the efficacy of our adaptive sampling rules in a controlled yet realistic environment, we construct the experimental setting as follows.

\paragraph{Known Prior Construction} For every question and every LLM, we utilize the raw 40 generations provided in the dataset to compute an empirical answer-frequency distribution. We treat this empirical distribution as the ground-truth distribution $\pi$ for the specific query. In the ``known prior'' setting, the algorithm has full access to this specific $\pi$.


\paragraph{Uncertain Prior Set Construction} In the ``uncertain prior'' setting, we assume the exact $\pi$ is unknown. To construct the candidate prior set $\Pi^M$, we randomly partition the dataset into a training set ($70\%$) and a testing set ($30\%$). The empirical answer distributions from the training set are collected to form the candidate set $\Pi^M$. During evaluation on the testing set, the algorithm relies solely on this $\Pi^M$ without accessing the ground-truth distribution of the test queries. We further assume a uniform distribution over these candidates, i.e., $\lambda_m \equiv \frac{1}{M}$ for all $m \in [M]$.


To simulate the stochastic nature of LLM generation, we do not merely replay the original sequence. Instead, we generate new, synthetic generation trajectories of length $100$ by subsampling (with replacement) from $\pi$.

We evaluate the performance of our framework with $L=2,3$ by comparing it against the same ASC baseline introduced before. We also record the Answer Accuracy (Ans. Acc., the fraction of questions whose returned answer matches the ground-truth answer). The experiments are conducted under three datasets: CommonsenseQA (for easy commonsense reasoning questions), DisambiguationQA (for hard commonsense reasoning questions), and GSM8K (for arithmetic reasoning) for three LLMs: Qwen-2.5-72B, GPT-4o mini, and LLaMA-3.1-405B. Table \ref{tab:comparison_results_commonsense_merged} presents results under Dataset CommonsenseQA for the three LLM models repeated for 5 times in the testing set and we leave the results for other two datasets to \cref{appendix:numerical}, which demonstrate similar results and insights.

\begin{table*}[htbp]
\centering
\caption{Comparison across methods and models under Dataset CommonsenseQA. We highlight in \textbf{bold} the comparison between ASC, the simple prior-free method, and our $L=3$ (uncertain prior) method, where the uncertain prior can be realistically learned from data.  Our method yields a much better tradeoff between Answer Accuracy vs.\ Number Generated, often with Pareto improvements.}
\small
\setlength{\tabcolsep}{2.5pt}
\renewcommand{\arraystretch}{1.1}
\resizebox{\textwidth}{!}{%
\begin{tabular}{ll ccc ccc ccc}
\toprule
& & \multicolumn{3}{c}{$1-\delta=0.95$} & \multicolumn{3}{c}{$1-\delta=0.975$} & \multicolumn{3}{c}{$1-\delta=0.99$} \\
\cmidrule(lr){3-5}\cmidrule(lr){6-8}\cmidrule(lr){9-11}
Model & Method
& Ans. Acc. & Mode Acc. & Num. Gen.
& Ans. Acc. & Mode Acc. & Num. Gen.
& Ans. Acc. & Mode Acc. & Num. Gen. \\
\midrule
\multirow{5}{*}{Qwen-2.5-72B}
& $L=2$ (known prior)     & 87.5\% & 98.9\% & 3.41 & 88.0\% & 99.4\% & 3.74 & 88.0\% & 99.4\% & 4.27 \\
& $L=3$ (known prior)     & 87.5\% & 99.0\% & 3.38 & 88.0\% & 99.4\% & 3.68 & 88.0\% & 99.4\% & 4.24 \\
& $L=2$ (uncertain prior) & 86.3\% & 95.3\% & 1.00 & 87.9\% & 98.8\% & 4.15 & 88.1\% & 99.5\% & 6.41 \\
& $L=3$ (uncertain prior) & \textbf{86.3\%} & \textbf{95.3\%} & \textbf{1.00} & \textbf{87.9\%} & \textbf{98.8\%} & \textbf{4.13} & \textbf{88.1\%} & \textbf{99.5\%} & \textbf{6.23} \\
& ASC                     & \textbf{87.2\%} & \textbf{99.6\%} & \textbf{5.28} & \textbf{87.6\%} & \textbf{100.0\%} & \textbf{6.75} & \textbf{87.6\%} & \textbf{100.0\%} & \textbf{8.04} \\
\midrule
\multirow{5}{*}{GPT-4o mini} 
& $L=2$ (known prior)     & 85.8\% & 99.6\% & 2.63 & 85.8\% & 99.6\% & 2.84 & 85.8\% & 99.6\% & 3.04 \\
& $L=3$ (known prior)     & 85.8\% & 99.6\% & 2.59 & 85.8\% & 99.6\% & 2.83 & 85.8\% & 99.6\% & 3.00 \\
& $L=2$ (uncertain prior) & 85.5\% & 96.9\% & 1.00 & 85.5\% & 99.1\% & 3.43 & 85.8\% & 99.6\% & 5.39 \\
& $L=3$ (uncertain prior) & \textbf{85.5\%} & \textbf{96.9\%} & \textbf{1.00} & \textbf{85.5\%} & \textbf{99.1\%} & \textbf{3.26} & \textbf{85.8\%} & \textbf{99.6\%} & \textbf{5.13} \\
& ASC                     & \textbf{86.0\%} & \textbf{100.0\%} & \textbf{5.10} & \textbf{86.0\%} & \textbf{100.0\%} & \textbf{6.31} & \textbf{86.0\%} & \textbf{100.0\%} & \textbf{7.56} \\
\midrule
\multirow{5}{*}{LLaMA-3.1-405B} 
& $L=2$ (known prior)     & 88.8\% & 98.5\% & 6.92 & 89.3\% & 99.0\% & 7.72 & 89.4\% & 99.0\% & 8.71 \\
& $L=3$ (known prior)     & 88.6\% & 98.3\% & 6.50 & 89.3\% & 99.0\% & 7.40 & 89.4\% & 99.0\% & 8.19 \\
& $L=2$ (uncertain prior) & 88.7\% & 97.5\% & 5.63 & 89.3\% & 98.6\% & 8.67 & 89.1\% & 98.7\% & 10.61 \\
& $L=3$ (uncertain prior) & \textbf{88.6\%} & \textbf{97.4\%} & \textbf{4.58} & \textbf{89.2\%} & \textbf{98.5\%} & \textbf{7.42} & \textbf{89.3\%} & \textbf{98.6\%} & \textbf{9.10} \\
& ASC                     & \textbf{90.4\%} & \textbf{99.6\%} & \textbf{7.55} & \textbf{90.0\%} & \textbf{100.0\%} & \textbf{9.60} & \textbf{90.0\%} & \textbf{100.0\%} & \textbf{12.03} \\
\bottomrule
\end{tabular}%
}
\label{tab:comparison_results_commonsense_merged}
\end{table*}


The results highlight three key insights driven by the model's high intrinsic consistency (which makes the mode accuracy fairly high and nearly invariant with small $\delta$ for ASC). First, in the \textit{known prior} setting, our method is uniformly more sample-efficient than the ASC baseline while essentially matching its mode accuracy, signifying a Pareto improvement from leveraging the exact prior. Second, in the more practical \textit{uncertain prior} setting, our method is again more sample-efficient than ASC while still satisfying the $1-\delta$ confidence thresholds (although not matching the exact mode accuracy of ASC).
Interestingly, when $1-\delta=0.95$, the uncertain prior algorithms stop after a single sample, which saves more than 80\% in sampling costs compared to ASC while still meeting the mode accuracy threshold of 0.95.
Finally, despite these drastic reductions in sampling cost and a slight reduction in mode accuracy, the answer accuracy of our uncertain prior methods remains comparable to that of the conservative ASC baseline and can even be higher than ASC in some scenarios. We note that this phenomenon is not merely coincidental: empirical evidence indicates that the true answer often lies within the Top-2 candidates (with 93.3\% accuracy) rather than the mode alone (with 87.6\% accuracy) in Dataset CommonsenseQA for these LLMs. By employing early stopping, our method retains a higher probability of selecting the runner-up candidate (i.e., the answer corresponds to $p_2$) than ASC, sometimes recovering the true answer in ambiguous cases where the mode is incorrect, as is also observed in the study of \citet{chen2024more}. Understanding exactly when and why the runner-ups outperform the mode remains a compelling subject for future investigation for efficient inference of LLM answers.



\section{Conclusion}

In this work, we bridged the gap between Bayesian optimal stopping and LLM test-time scaling. By demonstrating that a coarse, $L$-aggregated posterior ($L=3$) achieves asymptotic optimality, we established a practical framework that substantially reduces LLM sampling costs without sacrificing the accuracy gains of self-consistency. Overall, our framework provides a theoretically grounded ``sweet spot'' for balancing computational complexity and statistical efficiency during real-time inference.

While our framework can help achieve significant sampling cost savings for LLM test-time scaling, we acknowledge several practical limitations and open questions: (i) Cold-start issue: Our framework leverages historical data to estimate prior information, thereby improving sampling efficiency. For completely new domains or scenarios without any collected data, it reverts to the standard prior-free ASC baseline. (ii) Prior sensitivity: In cases where the estimated prior is biased, the theoretical performance guarantees on the robustness of our algorithms remain an open question to be explored. (iii) Non-mode ground truth: In scenarios where the LLM's inherent mode answer is not actually the ground truth, how to appropriately extend our framework to address this discrepancy presents an interesting question for future research.

\section*{Acknowledgments}

This work is supported by ONR-13983263 and 2027 New York University Center for Global Economy and Business grant.

\section*{Impact Statement}

This paper presents work whose goal is to advance the field of Machine Learning, specifically by optimizing the inference efficiency of LLMs. By significantly reducing the number of samples required for accurate reasoning, our method contributes to lowering the operational costs, energy consumption, and computational barriers associated with deploying large models. There are many potential societal consequences of our work, none which we feel must be specifically highlighted here.

\bibliography{icml-style2026_deprecated/icml_ref}
\bibliographystyle{icml2026}

\newpage
\appendix
\onecolumn

\section{Appendix: Proof of Main Results}\label{app:main_proof}

For simplicity, we will treat all $M(n)$ (and therefore $L(n)$) in the paper as $K$ (and $L-1$) in the appendix.

\begin{proof}[Proof of \cref{thm:singleLLM}.] Here we present the main idea in proving the result. We start with the case of $L=2$ and then generalize the idea to the general $L \geq 3$.
\begin{itemize}[leftmargin=8pt]
\item For the case of $L=2$: We let $A_i := p_i^{n_1} \cdot S_i$ and $\tilde{A}_i := p_i^{n_1} \cdot \tilde{S}_i$ with $S_i$ introduced as follows for any $i \in [K]$:
\begin{align}\label{S_2}
S_i & := \sum_{\mathbf{r}^{(-i)}} \dfrac{(n-n_1)!}{\prod_{j \neq i} r_{j}!} \prod_{j \neq i} p_j^{r_j} = (1-p_i)^{n-n_1} \cdot \mathbb{P}^{(-i)}(\mathbf{r}^{(-i)} \in \mathcal{R}_{n,n_1}),
\end{align}
where the event $\mathcal{R}_{n,n_1} := \{\mathbf{r}^{(-i)}: \sum_{j \neq i} r_j=n-n_1, \max_{j \neq i} r_j \leq n_1\}$ and $\mathbb{P}^{(-i)}$ denotes the Multinomial distribution such that we have:
\begin{align*}
\mathbf{r}^{(-i)} \sim \operatorname{Mult}\left(n-n_1,\mathbf{q}^{(-i)}\right), \ \mathbf{q}^{(-i)} := (q^{(-i)}_{j})_{j \neq i},\  q^{(-i)}_{j} := \dfrac{p_{j}}{1-p_{i}}\ \forall j \neq i.
\end{align*}
Note that compared with $\tilde{S}_{\psi}$ (which we can abbreviate as $\tilde{S}_{i}$ for the $L=2$ case), $S_i$ can be interpreted as an approximation of $\tilde{S}_i$ with $w(\mathbf{r}) \equiv 1$. As $w(\mathbf{r})$ in $\tilde{S}_i$ is always no greater than 1, we can conclude $\tilde{S}_{i} \leq {S}_{i}$ for any $i \in [K]$.

Before proceeding, we first introduce the following auxiliary Lemma with its detailed proof left in \cref{app:aux_proof}.
\begin{lemma}\label{lem:S2}
    For the $S_i$ and $\tilde{S}_i$ defined above, we have:
\begin{align*}
1 \leq \dfrac{S_{i+1}}{S_i} \leq \left(\dfrac{p_i}{p_{i+1}}\right)^{n_1}, \qquad 1 \leq \dfrac{\tilde{S}_{i+1}}{\tilde{S}_i} \leq \left(\dfrac{p_i}{p_{i+1}}\right)^{n_1}. 
\end{align*}
\end{lemma}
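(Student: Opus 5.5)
The plan is a termwise comparison. I would build an explicit bijection between the index sets of the sums defining $S_i$ and $S_{i+1}$ (and likewise $\tilde S_i$ and $\tilde S_{i+1}$), under which each summand of $S_{i+1}$ is the corresponding summand of $S_i$ times a factor lying in $[1,(p_i/p_{i+1})^{n_1}]$. Since all summands are nonnegative, the same bounds then pass to the ratio of the sums.

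\textbf{The bijection.} $S_i$ sums over allocations $\mathbf r^{(-i)}=(r_j)_{j\neq i}$ with $\sum_{j\neq i} r_j=n-n_1$ and $\max_{j\neq i} r_j\le n_1$. $S_{i+1}$ sums over $\mathbf r^{(-(i+1))}=(r'_j)_{j\neq i+1}$ with the same two constraints. Define $\Phi(\mathbf r^{(-i)})=\mathbf r'$ by swapping the roles of the labels $i$ and $i+1$: set $r'_i:=r_{i+1}$ and $r'_j:=r_j$ for $j\notin\{i,i+1\}$. The multiset of values $\{r_j\}$ is unchanged, so $\Phi$ preserves the sum constraint and the cap $\max\le n_1$. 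It is therefore a bijection between the two index sets, with inverse given by the reverse swap.

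\textbf{Termwise comparison.} The multinomial coefficient $(n-n_1)!/\prod r_j!$ depends only on the multiset of values, so it is the same for $\mathbf r$ and $\Phi(\mathbf r)$. In the probability products, every factor $p_j^{r_j}$ with $j\notin\{i,i+1\}$ is common to both terms. The only difference is that $p_{i+1}^{r_{i+1}}$ in the $S_i$-term becomes $p_i^{r_{i+1}}$ in the $S_{i+1}$-term. Hence
$$\frac{\text{term of }S_{i+1}\text{ at }\Phi(\mathbf r)}{\text{term of }S_i\text{ at }\mathbf r}=\left(\frac{p_i}{p_{i+1}}\right)^{r_{i+1}}.$$
Because $p_i\ge p_{i+1}>0$ and $0\le r_{i+1}\le n_1$, this ratio lies in $[1,(p_i/p_{i+1})^{n_1}]$. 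Summing over the bijection gives the stated bounds for $S_{i+1}/S_i$.

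\textbf{The tilde version.} The same argument works for $\tilde S_i$, provided the weight $w(\mathbf r)=\binom{c'_d+m(\mathbf r)}{c'_d}^{-1}$ is invariant under $\Phi$. Here $m(\mathbf r)$ counts the tail labels whose allocation equals $v_d$. For $L=2$ we have $v_d=n_1$, and $\Phi$ only relabels coordinates without changing values, so $m(\Phi(\mathbf r))=m(\mathbf r)$ and $w$ is preserved. Thus the $\tilde S$ summands carry identical weights, and the termwise ratio is again $(p_i/p_{i+1})^{r_{i+1}}$.

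The only point that needs care is confirming that every constraint and weight in the definitions depends on the allocation only through its multiset of values. This holds here because labels $i$ and $i+1$ enter the summand solely through their probabilities. I expect no genuine obstacle beyond this verification.
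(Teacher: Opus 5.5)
Your proof is correct and is essentially the paper's argument. The paper writes $S_i=(n-n_1)!\,[z^{n-n_1}]\bigl(T_{n_1}(p_{i+1}z)H(z)\bigr)$ and $S_{i+1}=(n-n_1)!\,[z^{n-n_1}]\bigl(T_{n_1}(p_iz)H(z)\bigr)$, where $T_u(x)=\sum_{r=0}^u x^r/r!$ and $H(z)=\prod_{j\notin\{i,i+1\}}T_{n_1}(p_jz)$ is a common cofactor with nonnegative coefficients. It then compares coefficients of the single differing factor, which is exactly your termwise ratio $(p_i/p_{i+1})^{r_{i+1}}$ grouped by the value of $r_{i+1}$; your relabeling bijection additionally makes the invariance of $w(\mathbf r)$ for the $\tilde S_i$ case explicit, which the paper only asserts with details omitted.
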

According to \cref{lem:S2}, we can conclude:
\begin{align*}
\dfrac{A_{i+1}}{A_i} = \left(\dfrac{p_{i+1}}{p_i}\right)^{n_1} \cdot  \dfrac{S_{i+1}}{S_i} \leq 1, \qquad \dfrac{\tilde{A}_{i+1}}{\tilde{A}_i} = \left(\dfrac{p_{i+1}}{p_i}\right)^{n_1} \cdot  \dfrac{\tilde{S}_{i+1}}{\tilde{S}_i} \leq 1
\end{align*}
This just illustrates the fact that $A_1 > A_2 \geq \cdots \geq A_K$ and $\tilde{A}_1 > \tilde{A}_2 \geq \cdots \geq \tilde{A}_K$ hold (actually we only require the latter hold, which is enough for the proof). Recall that the optimal stopping rule satisfies $\dfrac{\tilde{A}_1}{\sum_{i=1}^K \tilde{A}_i} \geq 1-\delta$, which is equivalent to $\dfrac{\tilde{A}_1}{\sum_{i=2}^K \tilde{A}_i} \geq \dfrac{1 - \delta}{\delta}$. We can thus apply the following sandwich bound 
$$
\dfrac{\tilde{A}_1}{(K-1)\cdot \tilde{A}_2} \leq \dfrac{\tilde{A}_1}{\sum_{i=2}^K \tilde{A}_i} \leq \dfrac{\tilde{A}_1}{\tilde{A}_2}.
$$
Instead of directly analyzing $\dfrac{\tilde{A}_1}{\tilde{A}_2}$, we will turn to the simpler $\dfrac{{A}_1}{{A}_2}$. We first present the connection between $\dfrac{\tilde{A}_1}{\tilde{A}_2}$ and $\dfrac{{A}_1}{{A}_2}$.

For $\dfrac{A_1}{\tilde{A}_1} = \dfrac{S_1}{\tilde{S}_1}$: it can be further decomposed as
\begin{align*}
\dfrac{S_1}{\tilde{S}_1} & = \dfrac{\sum_{\mathbf{r}^{(-1)}:\max_j{r_j} \leq v_{d}-1} \dfrac{(n-n_1)!}{\prod_{j \neq 1} r_{j}!} \prod_{j \neq 1} p_j^{r_j}+\sum_{\mathbf{r}^{(-1)}:\max_j{r_j} = v_{d}} \dfrac{(n-n_1)!}{\prod_{j \neq 1} r_{j}!} \prod_{j \neq 1} p_j^{r_j}}{\sum_{\mathbf{r}^{(-1)}:\max_j{r_j} \leq v_{d}-1} \dfrac{(n-n_1)!}{\prod_{j \neq 1} r_{j}!} \prod_{j \neq 1} p_j^{r_j} + \sum_{\mathbf{r}^{(-1)}:\max_j{r_j} = v_{d}} w(\mathbf{r}) \cdot \dfrac{(n-n_1)!}{\prod_{j \neq 1} r_{j}!} \prod_{j \neq 1} p_j^{r_j}} =: \dfrac{S_1^\text{no-tie} + S_1^\text{tie}}{S_1^\text{no-tie} + \tilde{S}_1^\text{tie}} \\
& \leq 1 + \dfrac{S_1^\text{tie}}{S_1^\text{no-tie}},
\end{align*}
where $S_1^\text{no-tie}$ denotes the total likelihood when $r_j$ does not hit the boundary $v_d$ (i.e., $n_{L-1}$), which takes the same value for both $S_1$ and $\tilde{S}_1$; and $S_1^\text{tie}$, $\tilde{S}_1^\text{tie}$ denote the total likelihood when there exists $r_j$ hit the boundary $v_d$ (which takes different value for $S_1$ and $\tilde{S}_1$), and the last inequality holds due to $\tilde{S}_1^\text{tie} \geq 0$.

Now in both $S_1^\text{tie}$ and $S_1^\text{no-tie}$ we do not have the complex weight $w(\mathbf{r})$. Next, note that by the definition of $S_1^\text{tie}$ and $S_1^\text{no-tie}$, we have
\begin{align*}
\dfrac{S_1^\text{tie}}{S_1^\text{no-tie}} \leq \dfrac{\mathbb{P}^{(-1)}(\max_{j} r_j \geq n_1)}{1-\mathbb{P}^{(-1)}(\max_{j} r_j \geq n_1)}.
\end{align*}
Thus it suffices to upper bound $\mathbb{P}^{(-1)}(\max_{j} r_j \geq n_1)$. By applying the union bound and that each $r_j \sim \operatorname{Bin}(n-n_1,q_j^{(-1)})$ and the Chernoff bound for Binomial distribution, we have
\begin{align*}
\mathbb{P}^{(-1)}(\max_{j} r_j \geq n_1) &\leq \sum_{j=2}^K \mathbb{P}^{(-1)}(r_j \geq n_1) \\
& \leq (K-1) \cdot \mathbb{P}\left(\operatorname{Bin}\left(n-n_1,q_2^{(-1)}\right) \geq n_1 \right) \\
& \leq (K-1) \cdot \exp (-c_0 \cdot  n) 
\end{align*}
This result holds according to the assumption that $p_1 > p_2$. We omit the detailed analysis here and refer readers to the analysis of (\ref{prob_KL}), which is exactly the same. The main idea is that: we can safely bounded $\mathbb{P}^{(-1)}(\max_{j} r_j \geq n_1)$ by any constant strictly smaller than 1 for sufficiently large $n$ (as is the case in our asymptotic regime). Thus, we can simply treat $\frac{S_1^\text{tie}}{S_1^\text{no-tie}} \leq \frac{1/2}{1-1/2} = 1$. We note that the ratio should be a function of $n$ and will converge to 0 exponentially fast, while we can still simply treat it as any positive constant, which will not affect our final results. The same analysis also applies for $\frac{S_2}{\tilde{S}_2}$. 

Therefore, we have
\begin{align*}
 \dfrac{{A}_1}{2(K-1) \cdot {A}_2}  \leq \dfrac{\tilde{A}_1}{(K-1)\cdot \tilde{A}_2} \leq \dfrac{\tilde{A}_1}{\sum_{i=2}^K \tilde{A}_i} \leq \dfrac{\tilde{A}_1}{\tilde{A}_2} \leq \dfrac{2{A}_1}{{A}_2}.
\end{align*}

Define:
$$
n_{\text{low}} := \inf \left\{n: \log \dfrac{A_1}{A_2} \geq \log \dfrac{1-\delta}{\delta} - \log 2\right\}, \quad n_{\text{upp}} := \inf \left\{n: \log \dfrac{A_1}{A_2} \geq \log \dfrac{1-\delta}{\delta}+\log (K-1) + \log 2\right\},
$$
where $n_{\text{low}}$ corresponds to the stopping time when $\log \dfrac{A_1}{A_2} \geq \log \dfrac{1-\delta}{\delta} - \log 2$ and $n_{\text{upp}}$ corresponds to the stopping time when $\log \dfrac{A_1}{2(K-1)\cdot A_2} \geq \log \dfrac{1-\delta}{\delta}$. Following the sandwich bound, we have $n_{\text{low}} \leq n^{\star,2} \leq n_{\text{upp}}$ holds a.s.

We use $Z_n$ to denote the log-likelihood ratio by period $n$ (i.e., sampling for $n$ times) that $Z_n := \log \dfrac{A_1}{A_2} = \sum_{t=1}^n Y_t$. Now, by the equivalent form of $S_i = (1-p_i)^{n-n_1} \cdot \mathbb{P}^{(-i)}(\mathbf{r} \in \mathcal{R}_{n,n_1})$, $Y_t$ can be defined through:
$$
Y_{t+1}= \begin{cases}\log \dfrac{p_1}{p_2} + \log \dfrac{\mathbb{P}^{(-1)}(\mathbf{r} \in \mathcal{R}_{t+1,t_1+1})}{\mathbb{P}^{(-1)}(\mathbf{r} \in \mathcal{R}_{t,t_1})} - \log \dfrac{\mathbb{P}^{(-2)}(\mathbf{r} \in \mathcal{R}_{t+1,t_1+1})}{\mathbb{P}^{(-2)}(\mathbf{r} \in \mathcal{R}_{t,t_1})}, & \text{Observe $u_1$ at period } t+1 \\ \log \dfrac{1-p_1}{1-p_2} + \log \dfrac{\mathbb{P}^{(-1)}(\mathbf{r} \in \mathcal{R}_{t+1,t_1})}{\mathbb{P}^{(-1)}(\mathbf{r} \in \mathcal{R}_{t,t_1})} - \log \dfrac{\mathbb{P}^{(-2)}(\mathbf{r} \in \mathcal{R}_{t+1,t_1})}{\mathbb{P}^{(-2)}(\mathbf{r} \in \mathcal{R}_{t,t_1})}, & \text{Observe others}\end{cases}
$$
where we use $t_1$ to denote the number of most frequent observations by period $t$. We further define $\tilde{Z}_n := n_1 \cdot \log \frac{p_1}{p_2} + (n-n_1) \cdot \log \frac{1-p_1}{1-p_2}$ and $\Lambda_n := \log \mathbb{P}^{(-1)}(\mathbf{r} \in \mathcal{R}_{n,n_1}) - \log \mathbb{P}^{(-2)}(\mathbf{r} \in \mathcal{R}_{n,n_1})$. Thus, we can decompose $Z_n$ into $Z_n = \tilde{Z}_n + \Lambda_n$. Define $\phi(x):=x \log \frac{p_1}{p_2}+(1-x) \log \frac{1-p_1}{1-p_2}$ and let $J_t \in \{1,2,\cdots,K\}$ denote the ``true type'' of the most frequent observation denoted as $u_1$ (in the dataset $\mathcal{C}_t^2$). We can also introduce $\tilde{Y}_t$ as follows:
$$
\tilde{Y}_{t+1}= \begin{cases}\log \dfrac{p_1}{p_2}, & \text{Observe $u_1$ at period } t+1 \\ \log \dfrac{1-p_1}{1-p_2}, & \text{Observe others}\end{cases}
$$
Accordingly, we have $Y_{t+1}=\tilde{Y}_{t+1}+\Delta_{t+1}^{(-1)}-\Delta_{t+1}^{(-2)}$, where $\Delta_{t+1}^{(-i)} := \log \frac{\mathbb{P}^{(-i)}(\mathbf{r} \in \mathcal{R}_{t+1,t_1+\boldsymbol{1}\{o_{t+1}=u_1\}})}{\mathbb{P}^{(-i)}(\mathbf{r} \in \mathcal{R}_{t,t_1})}$ for any $i=1,2$ and we use $\boldsymbol{1}\{o_{t+1}=u_1\}=1$ to denote observing $u_1$ at period $t+1$ and $\boldsymbol{1}\{o_{t+1}=u_1\}=0$ to denote observation answers other than $u_1$.

We use $\mu_t := \mathbb{E}[Y_{t+1} \mid \mathcal{C}_t^2]$ to denote the conditional single-period drift at period $t$ given historical observation $\mathcal{C}_t^2$, which exists since the single-step increment $|Y_{t+1}|$ is bounded by a constant almost surely. The expected single-period drift at period $t$ can be expressed as:
$$
\mathbb{E}[\mu_t] = \mathbb{E}[Y_{t+1}] = \sum_{i=1}^K \phi(p_i) \cdot \mathbb{P}(J_t=i) + \epsilon_t, \qquad \epsilon_t := \mathbb{E}[\Delta_{t+1}^{(-1)}-\Delta_{t+1}^{(-2)}].
$$
Define $\Delta_i := p_1 - p_i >0$ and we use $t_i := \sum_{\ell=1}^t \boldsymbol{1}\{o_{\ell}=a_i\}$ to denote the number of observations of answer $i$ by period $t$. For any $i = 2,3,\cdots,K$, according to Hoeffding's inequality,
\begin{align*}
\mathbb{P}(J_t = i)  \leq \mathbb{P}(t_i \geq t_1) & = \mathbb{P}\left(\sum_{\ell=1}^t (\boldsymbol{1}\{o_{\ell}=a_i\} - \boldsymbol{1}\{o_{\ell}=a_1\})\geq 0\right) \\
& = \mathbb{P}\left(\sum_{\ell=1}^t (\boldsymbol{1}\{o_{\ell}=a_i\} - \boldsymbol{1}\{o_{\ell}=a_1\}) + t\Delta_i \geq t\Delta_i \right) \\
& \leq \exp(-2\Delta_i^2 t). 
\end{align*}
As a result,
\begin{align}\label{ineq:L2_hoef}
\mathbb{P}(J_t \neq 1) \leq \sum_{i=2}^K \exp(-2\Delta_i^2 t) \leq (K-1) \cdot \exp(-2\Delta_1^2 t).
\end{align}
Next, we will bound the deviation $\epsilon_t$. For any $\eta \in (0, \min\{\frac{p_1-p_2}{4},p_2\})$, define random event $\mathcal{G}_t:=\left\{|t_1-t p_1| \leq t \eta\right\}$. Again, by Hoeffding's inequality, $\mathbb{P}(\mathcal{G}_t^c) \leq 2 \exp(-2\eta^2 t)$. Under $\mathcal{G}_t$, we have
$$
\dfrac{t_1}{t-t_1} \in [\alpha_l, \alpha_u], \qquad \alpha_l:=\dfrac{p_1-\eta}{1-p_1+\eta}, \quad \alpha_u:=\dfrac{p_1+\eta}{1-p_1-\eta},
$$
and for any $s \in \{0,1\}$, $\frac{t_1+s}{t-t_1} \geq \alpha_l$ by definition. According to the definition of $\mathcal{R}_{t,t_1}$, we have its complement set $\mathcal{R}_{t,t_1}^c = \{\exists j: r_j \geq t_1+1\}$. For any $i \in \{1,2\}$ and $j \neq i$, under the probability measure $\mathbb{P}^{(-i)}$, we have $r_j \sim \operatorname{Bin}(t-t_1, q_j^{(-i)})$ (recall that $ q_j^{(-i)} := \frac{p_j}{1-p_i}$). Thus, for any $s \in \{0,1\}$, by the Chernoff bound for Binomial distribution,
\begin{align}\label{prob_KL}
\mathbb{P}^{(-i)}(r_j \geq t_1+s)=\mathbb{P}^{(-i)}\left(\dfrac{r_j}{t-t_1} \geq \dfrac{t_1+s}{t-t_1}\right) \leq \exp \left(-(t-t_1) \cdot \mathrm{D}_{\mathrm{KL}}(\theta_{t, s} || q_j^{(-i)})\right), 
\end{align}
with $\theta_{t, s} := \frac{t_1+s}{t-t_1} \geq \alpha_l$. Next, we will show that there exists a constant $c_1>0$ such that $\alpha_l - q_j^{(-i)} \geq c_1$ holds for any $i \in \{1,2\}$ and $j \neq i$:
\begin{enumerate}
    \item[(1)] When $i=1$: $q_j^{(-1)} \leq q_2^{(-1)} = p_2/(1-p_1)$. By the fact that $p_1 > p_2$ and $\eta < (p_1-p_2)/4$, we have:
    $$
    \alpha_l - q_j^{(-1)} \geq \dfrac{p_1-p_2}{2(1-p_1+\eta)} := c_{1,1} >0.
    $$
    \item[(2)] When $i=2$: $q_j^{(-2)} \leq q_1^{(-2)} = p_1/(1-p_2)$. Similarly, we have:
    $$
    \alpha_l - q_j^{(-2)} \geq \dfrac{p_1 \cdot (p_1-p_2)}{2(1-p_1+\eta)\cdot (1-p_2)} := c_{1,2} >0.
    $$
\end{enumerate}
By choosing $c_1 = \min\{c_{1,1},c_{1,2}\}$, we have $\alpha_l - q_j^{(-i)} \geq c_1$. The KL divergence $ \mathrm{D}_{\mathrm{KL}}(\theta_{t, s} || q_j^{(-i)})$ can therefore be lower bounded by a constant $c_2>0$. Thus, under the random event $\mathcal{G}_t$ (specifically, under $t-t_1 \geq (1-p_1-\eta) \cdot t$), let the constant $c_3 := c_2 \cdot (1-p_1-\eta)$, we conclude that by the union bound,
$$
1 - \mathbb{P}^{(-i)}(\mathbf{r} \in \mathcal{R}_{t,t_1}) = \mathbb{P}^{(-i)}(\mathbf{r} \in \mathcal{R}_{t,t_1}^c) \leq (K-1)\cdot \exp(-c_3 \cdot t),
$$
and so does
$$
1 - \mathbb{P}^{(-i)}(\mathbf{r} \in \mathcal{R}_{t+1,t_1+s}) \leq (K-1)\cdot \exp(-c_3 \cdot t)
$$
holds for any $i \in \{1,2\}$ and $s \in \{0,1\}$.  Therefore, there exists a $T_0>0$ such that when $t>T_0$, $(K-1)\cdot \exp(-c_3 t) \leq 1/2$. According to the inequality that $|\log (1-x)| \leq 2x$ for any $x \in [0,1/2]$, and under the event $\mathcal{G}_t$, we have:
\begin{align*}
\bigg{|} \log \dfrac{\mathbb{P}^{(-i)}(\mathbf{r} \in \mathcal{R}_{t+1,t_1+s})}{\mathbb{P}^{(-i)}(\mathbf{r} \in \mathcal{R}_{t,t_1})} \bigg{|}  & \leq |\log \mathbb{P}^{(-i)}(\mathbf{r} \in \mathcal{R}_{t+1,t_1+s}) | + |\log \mathbb{P}^{(-i)}(\mathbf{r} \in \mathcal{R}_{t,t_1})| \\
& = 2 \cdot (1-\mathbb{P}(\mathbf{r} \in \mathcal{R}_{t+1,t_1+s}) ) + 2 \cdot (1 - \mathbb{P}(\mathbf{r} \in \mathcal{R}_{t,t_1})) \\
& \leq 4(K-1) \cdot \exp(-c_3  t)
\end{align*}
holds for any $i \in \{1,2\}$ and $s \in \{0,1\}$. Then we apply the following decomposition of the expectation and obtain:
\begin{align*}
     \mathbb{E}[|\Delta_{t+1}^{(-i)}|] =   \mathbb{E}[|\Delta_{t+1}^{(-i)}|  \mid \mathcal{G}_t] + \mathbb{E}[|\Delta_{t+1}^{(-i)}|  \mid \mathcal{G}_t^c] \leq c_4 (K-1) \exp(-c_5 t)
\end{align*}
for some constants $c_4,c_5>0$ given that $|\Delta_{t+1}^{(-i)}|$ is upper bounded by a constant. As a result, $|\epsilon_t| \leq 2c_4 (K-1) \exp(-c_5 t)$. Consequently, we have:
$$
|\mathbb{E}[\mu_t] - \phi(p_1)| \leq c_6 (K-1) \exp(-c_7 t)
$$
for some constants $c_6,c_7>0$ for any $t \geq T_0$. At last, we have the expected log-likelihood ratio at the stopping time $n_{\text{low}}$:
\begin{align*}
    \mathbb{E}[Z_{n_{\text{low}}}] & = \mathbb{E}\left[\sum_{t\geq 0} Y_{t+1} \cdot \boldsymbol{1} \{t < n_{\text{low}}\} \right] = \sum_{t\geq 0} \mathbb{E}\left[ Y_{t+1} \cdot \boldsymbol{1} \{t < n_{\text{low}}\} \right]  = \sum_{t \geq 0}\mathbb{E}\left[\mathbb{E}\left[Y_{t+1} \cdot \boldsymbol{1}\{t < n_{\text{low}}\} \mid \mathcal O_t^2\right]\right] \\
    &=\sum_{t\geq 0} \mathbb{E} \left[ \mathbb{E}\left[ Y_{t+1} \cdot \boldsymbol{1} \{t < n_{\text{low}}\} \mid \mathcal{O}_t^2  \right] \right] = \sum_{t\geq 0} \mathbb{E} \left[\boldsymbol{1} \{t < n_{\text{low}}\} \cdot \mathbb{E}\left[ Y_{t+1}   \mid \mathcal{O}_t^2  \right] \right] \\
    & = \sum_{t\geq 0} \mathbb{E} \left[ \boldsymbol{1} \{t < n_{\text{low}}\} \cdot \mu_t \right] = \sum_{t\geq 0} \mathbb{E} \left[ \boldsymbol{1} \{t < n_{\text{low}}\} \cdot (\phi(p_1) + (\mu_t-\phi(p_1))) \right] \\
    & = \phi(p_1) \cdot \mathbb{E}[n_{\text{low}}] + \sum_{t \geq 0} \mathbb{E}\left[(\mu_t - \phi(p_1))\cdot \boldsymbol{1} \{t < n_{\text{low}}\}\right].
\end{align*}
with $\sum_{t \geq 0} \mathbb{E} |(\mu_t - \phi(p_1))\cdot \boldsymbol{1} \{t < n_{\text{low}}\}| \leq c_8 K$ for some constant $c_8 >0$. Following the definition of $n_{\text{low}}$, we thus have $\log (1-\delta)/\delta - \log 2 \leq  \mathbb{E}[Z_{n_{\text{low}}}] \leq \log (1-\delta)/\delta - \log 2 + c_9$ (for some constant $0 \leq c_9 < 1$ due to the rounding issue). As a result, we conclude:
$$
1  - \dfrac{\log 2}{\log (1-\delta)/\delta} \leq \dfrac{\phi(p_1) \cdot \mathbb{E} [n_{\text{low}}]}{\log(1-\delta)/\delta} + \dfrac{\sum_{t \geq 0} \mathbb{E}\left[(\mu_t - \phi(p_1))\cdot \boldsymbol{1} \{t < n_{\text{low}}\}\right]}{\log(1-\delta)/\delta} \leq 1 - \dfrac{2-c_9}{\log(1-\delta)/\delta}. 
$$
Similarly, we also have for some constant $0 \leq c_{10} < 1$:
$$
1 + \dfrac{2 + \log (K-1)}{ \log(1-\delta)/\delta} \leq \dfrac{\phi(p_1) \cdot \mathbb{E} [n_{\text{upp}}]}{\log(1-\delta)/\delta} + \dfrac{\sum_{t \geq 0} \mathbb{E} \left[(\mu_t - \phi(p_1))\cdot \boldsymbol{1} \{t < n_{\text{upp}}\}\right]}{\log(1-\delta)/\delta} \leq 1 + \dfrac{2 + c_{10} + \log (K-1)}{\log(1-\delta)/\delta}. 
$$
Thus, by the fact that $n_{\text{low}} \leq n^{\star,2} \leq  n_{\text{upp}}$ holds a.s., we have:
$$
\lim_{\delta \rightarrow 0} \dfrac{\mathbb{E}[n^{\star,2}]}{\log(1/\delta)} = \dfrac{1}{\mathrm{D}_{\mathrm{KL}}(p_1 || p_2)}.
$$
($\delta$ should be the order of $o(\exp(-K))$.)
\item For the case of $L=3,\cdots,K$: Similarly, we introduce $A_{i_1,i_2,\cdots,i_{L-1}} := S_{i_1,i_2,\cdots,i_{L-1}} \cdot \prod_{t=1}^{L-1} p_{i_t}^{n_t}$ and $\tilde{A}_{i_1,i_2,\cdots,i_{L-1}} := \tilde{S}_{i_1,i_2,\cdots,i_{L-1}} \cdot \prod_{t=1}^{L-1} p_{i_t}^{n_t}$, with $S_{i_1,i_2,\cdots,i_{L-1}}$ defined as follows:
    \begin{align*}
        S_{i_1,i_2,\cdots,i_{L-1}} := (1-\sum_{t=1}^{L-1} p_{i_t})^{n-\sum_{t=1}^{L-1} n_t} \cdot \mathbb{P}^{-(i_1,\cdots,i_{L-1})}(\mathbf{r} \in \mathcal{R}_{n,n_1,\cdots,n_{L-1}}),
    \end{align*}
where $\mathcal{R}_{n,n_1,\cdots,n_{L-1}} := \{\mathbf{r}: \sum_{j \neq i_1,\cdots,i_{L-1}} r_j  = n - \sum_{t=1}^{L-1} n_t, \max_{j \neq i_1,\cdots,i_{L-1}} r_j \leq n_{L-1} \}$. Below we also introduce another auxiliary Lemma with its detailed proof left in Appendix \ref{app:aux_proof}.
\begin{lemma}\label{lem:SL}
    For the $A_{i_1,i_2,\cdots,i_{L-1}}$ defined above, we have:
 \begin{align}\label{max_A}
    \max_{i_2,\cdots,i_{L-1}} A_{1,i_2,\cdots,i_{L-1}} = A_{1,2,\cdots,L-1}, \quad \max_{i_1 \neq 1} A_{i_1,i_2,\cdots,i_{L-1}}  = A_{2,1,3,\cdots,L-1};  
    \end{align}
\end{lemma}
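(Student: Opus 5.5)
We prove both identities in \eqref{max_A} by a pairwise exchange (rearrangement) argument on labelings, extending the monotonicity in \cref{lem:S2} from $L=2$ to general $L$. The key step is to write $A_{i_1,\cdots,i_{L-1}}$ as a sum over tail allocations rather than in the factored form $(1-\sum_t p_{i_t})^{\bar n}\,\mathbb{P}^{-(i_1,\cdots,i_{L-1})}(\cdot)$. With $\bar n := n-\sum_{t=1}^{L-1} n_t$, this gives
\begin{align*}
A_{i_1,\cdots,i_{L-1}} = \sum_{\mathbf{r}} \frac{\bar n!}{\prod_{j\notin\{i_t\}} r_j!}\,\prod_{t=1}^{L-1} p_{i_t}^{n_t}\prod_{j\notin \{i_t\}} p_j^{r_j},
\end{align*}
where the sum runs over $\mathbf{r}$ with $\sum_j r_j=\bar n$ and $\max_j r_j\le n_{L-1}$. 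Both identities then follow from two local moves, each of which weakly increases $A$.

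\textbf{Move 1 (swap within the head).} Take head positions $t<s$, so $n_t\ge n_s$, with $p_{i_t}\le p_{i_s}$. Swapping $i_t$ and $i_s$ leaves the tail sum unchanged, because the tail label set is the same. It multiplies the head product by $(p_{i_s}/p_{i_t})^{n_t-n_s}\ge 1$.

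\textbf{Move 2 (swap between head and tail).} Take a head label $i_t$ and a tail label $j$ with $p_j\ge p_{i_t}$, and replace $i_t$ by $j$ in the head. Define the bijection $\mathbf r\mapsto\mathbf r'$ between the two tail index sets by $r'_{i_t}:=r_j$, with all other coordinates unchanged. This map preserves the constraint set ($\sum=\bar n$, $\max\le n_{L-1}$) and the multinomial coefficient. Termwise, the new term divided by the old one equals $(p_j/p_{i_t})^{\,n_t-r_j}$. Since $r_j\le n_{L-1}\le n_t$, this ratio is at least $1$. Summing over $\mathbf r$ shows that $A$ weakly increases.

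\textbf{First identity.} Fix $i_1=1$. Repeatedly applying Move 2 to positions $t\ge 2$ makes the head label set equal to $\{1,\cdots,L-1\}$. Move 1 on positions $2,\cdots,L-1$ then sorts the labels so that $i_t=t$. Each move weakly increases $A$, and the procedure terminates. Hence $A_{1,2,\cdots,L-1}$ dominates every $A_{1,i_2,\cdots,i_{L-1}}$, which gives the first identity.

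\textbf{Second identity.} Restrict to $i_1\neq 1$. The argument runs in three steps.
\begin{itemize}[leftmargin=8pt]
\item If label $1$ lies in the tail, apply Move 2 to bring it into some position $s\ge 2$.
\item Use Move 2 at positions other than the one holding label $1$ to make the head set $\{1,\cdots,L-1\}$. These moves never place label $1$ in position $1$.
\item If $i_1\neq 2$, then label $2$ sits at some position $s\ge2$. Move 1 between positions $1$ and $s$ puts label $2$ at position $1$, which is admissible since $p_2\ge p_{i_1}$.
\end{itemize}
Finally, sort positions $2,\cdots,L-1$ with Move 1, which yields $(1,3,\cdots,L-1)$. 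Every step stays inside the class $i_1\neq 1$ and weakly increases $A$, so the maximum over this class is $A_{2,1,3,\cdots,L-1}$.

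\textbf{Main obstacle.} The delicate step is Move 2. We must check that the relabeling bijection really maps the constraint set $\mathcal R_{n,n_1,\cdots,n_{L-1}}$ onto itself, and that the exponent $n_t-r_j$ is nonnegative. This is exactly where the aggregation cap $\max_j r_j\le n_{L-1}$ is used. Ties among the $p_k$ only turn strict inequalities into equalities, so the maximizers claimed in \eqref{max_A} are still attained.
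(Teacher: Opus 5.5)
Your proof is correct and is essentially the paper's exchange argument. Replacing a head label by a more probable tail label weakly increases $A$, because the cap $r_j\le n_{L-1}\le n_t$ makes each termwise ratio $(p_j/p_{i_t})^{n_t-r_j}\ge 1$; your relabeling bijection is just the termwise form of the paper's coefficient-wise comparison $T_u(p_b z)\le_{\text{coef}} T_u(p_a z)\le_{\text{coef}}(p_a/p_b)^u T_u(p_b z)$. As in the paper, this is followed by a rearrangement of labels within the head.

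Your explicit bookkeeping that keeps label $1$ out of position $1$ in the second identity is a genuine refinement. The paper's unconstrained exchange procedure does not by itself guarantee that the iterates stay in the class $i_1\neq 1$.
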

 \begin{align}\label{max_A_tilde}
    \max_{i_2,\cdots,i_{L-1}} \tilde{A}_{1,i_2,\cdots,i_{L-1}} = \tilde{A}_{1,2,\cdots,L-1}, \quad \max_{i_1 \neq 1} \tilde{A}_{i_1,i_2,\cdots,i_{L-1}}  = \tilde{A}_{2,1,3,\cdots,L-1};  
    \end{align}
Recall that now the stopping rule satisfies $\dfrac{\sum_{\psi: \psi(1)=1} \tilde{A}_{\psi}}{\sum_{\psi} \tilde{A}_{\psi}} \geq 1-\delta$. By applying \cref{lem:SL} and the same analysis as before, we can turn it into the following sandwich inequalities:
$$
\dfrac{(K-L+1)! \cdot {A}_{1,2,3,\cdots,L-1}}{2(K-1)\cdot (K-1)! \cdot {A}_{2,1,3,\cdots,L-1}} \leq \dfrac{\sum_{\psi: \psi(1)=1} \tilde{A}_{\psi}}{\sum_{\psi:\psi(1) \neq 1} \tilde{A}_{\psi}} \leq  \dfrac{2(K-1)!\cdot {A}_{1,2,3,\cdots,L-1}}{(K-L+1)! \cdot {A}_{2,1,3,\cdots,L-1}}.
$$
Similarly, now we can define $n_{\text{low}}$ and $n_{\text{upp}}$ through (we throw away the $\log 2$ constants for simplicity reason, which will not affect our final conclusions):
\begin{align*}
n_{\text{low}} & := \inf \left\{n: \log \dfrac{A_{1,2,3,\cdots,L-1}}{A_{2,1,3,\cdots,L-1}} \geq \log \dfrac{1-\delta}{\delta} - \log(K-1)! + \log (K-L+1)!   \right\} \\
n_{\text{upp}} & := \inf \left\{n: \log \dfrac{A_{1,2,3,\cdots,L-1}}{A_{2,1,3,\cdots,L-1}} \geq \log \dfrac{1-\delta}{\delta}+ \log (K-1) + \log (K-1)! - \log (K-L+1)! \right\},   
\end{align*}
and following the sandwich bound, we still have $n_{\text{low}} \leq n^{\star,L} \leq n_{\text{upp}}$ holds a.s.

Thus, it suffices to focus on the term of $\dfrac{A_{1,2,3,\cdots,L-1}}{A_{2,1,3,\cdots,L-1}}$.  We can define the log-likelihood ratio through
$$
Z_n  := \log \dfrac{A_{1,2,3\cdots,L-1}}{A_{2,1,3,\cdots,L-1}} = (n_1-n_2)\cdot \log \dfrac{p_1}{p_2} := \sum_{t=1}^n Y_t,
$$
where the equality holds due to the fact that $S_{1,2,3\cdots,L-1} = S_{2,1,3,\cdots,L-1}$. As a result, now $Y_t$ has a simpler expression as that of the $L=2$ case:
$$
Y_{t+1}= \begin{cases}\log \dfrac{p_1}{p_2}, & \text{Observe $u_1$ at period } t+1 \\ \log \dfrac{p_2}{p_1}, & \text{Observe $u_2$ at period } t+1 \\ 
0, & \text{Observe other answers besides $u_1$ and $u_2$ at period } t+1.  \end{cases}
$$
Again, we use $\mu_t := \mathbb{E}[Y_{t+1} \mid \mathcal{O}_t^{L}]$ to denote the conditional single-period drift at period $t$ given historical observation $\mathcal{O}_t^L$; we use $J_t^{(1)} \in [K]$ and $J_t^{(2)} \in [K]$ to denote the ``true type'' of $u_1$ and $u_2$ at period $t$, respectively. Let $\phi(p_i,p_j) := (p_i - p_j) \cdot \log \dfrac{p_1}{p_2}$. The expected single-period drift at period $t$ can now be expressed as:
$$
\mathbb{E}[\mu_t] = \mathbb{E}[Y_{t+1}] = \sum_{i \neq j} \phi(p_i,p_j) \cdot \mathbb{P}(J_t^{(1)}=i,J_t^{(2)}=j), 
$$
and we want to bound the term of $|\mathbb{E}[\mu_t] - \phi(p_1,p_2)|$. Define set $\mathcal{S}_2 :=  \{j \in [K]: p_j = p_2\}$, and let $m := |\mathcal{S}_2|$. We can thus define ``good event'' as $\mathcal{E}_t:=\{J_t^{(1)}=1\} \cap \{J_t^{(2)} \in \mathcal{S}_2\}$. When the good event $\mathcal{E}_t$ holds, we would have $\mathbb{E}[\mu_t] = \phi(p_1,p_2)$. Besides, since $Y_{t+1} \in \left\{\log \dfrac{p_1}{p_2},\log \dfrac{p_2}{p_1},0\right\}$, we always have $|Y_{t+1}|\leq \log \dfrac{p_1}{p_2}$. Therefore,
$$
|\mathbb{E}[\mu_t] - \phi(p_1,p_2)| = |\mathbb{E} [(\mu_t - \phi(p_1,p_2))\cdot \boldsymbol{1} \{\mathcal{E}_t^c\} ]| \leq 2 \log \dfrac{p_1}{p_2} \cdot \mathbb{P}(\mathcal{E}_t^c). 
$$
Next, we will upper bound the term of $\mathbb{P}(\mathcal{E}_t^c)$. Note that $\mathbb{P}(\mathcal{E}_t^c) \leq \mathbb{P}(J_t^{(1)} \neq 1) + \mathbb{P}(J_t^{(2)} \notin \mathcal{S}_2)$, and we will bound the two terms separately. Recall that $\Delta_i = p_1 - p_i > 0$. We can directly follow the analysis in (\ref{ineq:L2_hoef}) and again conclude:
\begin{align}\label{J_prob_1}
\mathbb{P}(J_t^{(1)} \neq 1) \leq (K-1) \cdot \exp(-2\Delta_1^2 t).
\end{align}
For the term $\mathbb{P}(J_t^{(2)} \notin \mathcal{S}_2)$: as $\mathbb{P}(J_t^{(2)} \notin \mathcal{S}_2) = \mathbb{P}(J_t^{(1)} = 1, J_t^{(2)} \notin \mathcal{S}_2) + \mathbb{P}(J_t^{(1)} \neq 1, J_t^{(2)} \notin \mathcal{S}_2) \leq \mathbb{P}(J_t^{(1)} = 1, J_t^{(2)} \notin \mathcal{S}_2) + \mathbb{P}(J_t^{(1)} \neq 1)$, it suffices to bound $\mathbb{P}(J_t^{(1)} = 1, J_t^{(2)} \notin \mathcal{S}_2)$. Consider two cases: (i) the trivial case where $\mathcal{S}_2 = \{2,3,\cdots,K\}$, i.e., $p_2=p_3=\cdots=p_K$. Now $\mathbb{P}(J_t^{(1)} = 1, J_t^{(2)} \notin \mathcal{S}_2)=0$; (ii) $\mathcal{S}_2 = \{2,\cdots,m+1\}$ where $m \leq K-2$, i.e., $p_2=\cdots=p_{m+1} > p_{m+2}$, and we can define the (weak) gap of $\Gamma := p_2 - p_{m+2} >0$. Recall that $t_i = \sum_{\ell=1}^t \boldsymbol{1}\{o_{\ell}=a_i\}$ is the number of observations (of answer) $i$ by period $t$. In this case, we take any fixed $j^* \in \mathcal{S}_2$, and for any $i \notin \{1\} \cup \mathcal{S}_2$. By applying the Hoeffding's inequality, 
$$
\mathbb{P}(t_i \geq t_{j^*}) = \mathbb{P}\left(\sum_{\ell=1}^t (\boldsymbol{1}\{o_{\ell}=a_i\} - \boldsymbol{1}\{o_{\ell}=a_{j^*}\})\geq 0\right) \leq \exp(-2\Gamma^2 t).
$$
Thus, by the fact that $\{J_t^{(1)} = 1, J_t^{(2)} \notin \mathcal{S}_2\} \subseteq \cup_{i \notin \{1\} \cup \mathcal{S}_2} \{t_i \geq t_{j^*}\}$ and applying the union bound, 
\begin{align}\label{J_prob_2}
\mathbb{P}(J_t^{(1)} = 1, J_t^{(2)} \notin \mathcal{S}_2) \leq \sum_{i \notin \{1\} \cup \mathcal{S}_2 } \mathbb{P}(t_i \geq t_{j^*}) \leq (K-1-m) \cdot \exp(-2 \Gamma^2 t). 
\end{align}
Combining (\ref{J_prob_1}) and (\ref{J_prob_2}) together, we can obtain:
$$
\mathbb{P}(\mathcal{E}_t^c) \leq 2(K-1) \cdot \exp(-2\Delta_1^2 t) + (K-1-m) \cdot  \exp(-2 \Gamma^2 t) \leq c_{11} \cdot \exp(-c_{12} \cdot t) 
$$
for some constants $c_{11},c_{12} >0$. As a result,
$$
|\mathbb{E}[\mu_t] - \phi(p_1,p_2)| \leq 2 c_{11} \log \dfrac{p_1}{p_2}  \cdot \exp(-c_{13} \cdot t) =: c_{13} \cdot \exp(-c_{12} \cdot t)
$$
for some constant $c_{13}>0$ (independent of $\delta$). The remaining analysis directly follows that in the case of $L=2$ and we can get:
$$
\mathbb{E}[Z_{n_{\text{low}}}] = \phi(p_1,p_2) \cdot \mathbb{E}[n_{\text{low}}] + \sum_{t \geq 0} \mathbb{E}\left[(\mu_t - \phi(p_1,p_2))\cdot \boldsymbol{1} \{t < n_{\text{low}}\}\right],
$$
and similar for $\mathbb{E}[Z_{n_{\text{upp}}}]$. By the same analysis, we can conclude:
$$
\lim_{\delta \rightarrow 0} \dfrac{\mathbb{E}[n^{\star,L}]}{\log(1/\delta)} = \dfrac{1}{(p_1-p_2)\cdot \log \dfrac{p_1}{p_2}}.
$$
($\delta$ should be of the order $o(\exp(-K^{1+\epsilon}))$ for any $\epsilon>0$.)

\end{itemize}
\end{proof}

\begin{proof}[Proof of \cref{thm:singleLLM_unknown}.] We start with the case of $L=2$ and then generalize the idea to $L \geq 3$ as well.
\begin{itemize}[leftmargin=8pt]
\item For the case of $L=2$: Denote ${A}_{i}^m := p_{i,m}^{n_1} \cdot {S}_i^m$ and $\tilde{A}_{i}^m := p_{i,m}^{n_1} \cdot \tilde{S}_i^m$, where ${S}_i^m$ just follows the definition of $S_i$ introduced in the proof of Theorem \ref{thm:singleLLM}. Similar as before, we need to figure out the largest $\tilde{A}_{i}^m$ among all $i \in [K]$. Note that for any $m$, we still have $\tilde{A}_{i}^m \geq \tilde{A}_{i+1}^m$ holds according to the analysis in \cref{lem:S2}. Thus, we can conclude $\max_{i,m} \tilde{A}_i^m = \max_{m} \tilde{A}_1^m$. Besides, for each $m \in [M]$, we have $\sum_{i=2}^{K} \tilde{A}_i^m \in [\tilde{A}_2^m,(K-1)\cdot \tilde{A}_2^m]$. We further define $\hat{m}^\star := \operatorname{argmax}_{m \in [M]} \tilde{A}_1^m$ and $\hat{m}^{\dagger} := \operatorname{argmax}_{m \in [M]} \tilde{A}_2^m$, and $\lambda_{min} := \min_{m \in [M]} \lambda_m$. 

We can again apply the sandwich bound and obtain:
\begin{align*}
\dfrac{\lambda_{min}}{K-1} \cdot \dfrac{\tilde{A}_1^{\hat{m}^\star}}{\tilde{A}_2^{\hat{m}^\dagger}}  \leq \dfrac{\sum_{m=1}^M \lambda_m \tilde{A}_{1}^m}{\sum_{m=1}^M \lambda_m \sum_{i=2}^K \tilde{A}_{i}^m} \leq \dfrac{K-1}{\lambda_{min}} \cdot \dfrac{\tilde{A}_1^{\hat{m}^\star}}{\tilde{A}_2^{\hat{m}^\dagger}}.
\end{align*}

To determine the $\hat{m}^\star$ and $\hat{m}^\dagger$, we first introduce the shorthand notations $\rho := \dfrac{p_1}{1-p_1}$, $\rho_{1,m}^{(-2)} := \dfrac{p_{1,m}}{1-p_{2,m}}$, $\rho_{2,m}^{(-1)}:= \dfrac{p_{2,m}}{1-p_{1,m}}$ and the following auxiliary Lemma with its detailed proof left in \cref{app:aux_proof}.
\begin{lemma}\label{lem:Bm}
Let $B_{1}^m := \lim_{n \rightarrow \infty} \dfrac{1}{n}\log A_{1}^m$, $B_{2}^m := \lim_{n \rightarrow \infty} \dfrac{1}{n}\log A_{2}^m$ and $\tilde{B}_{1}^m := \lim_{n \rightarrow \infty} \dfrac{1}{n}\log \tilde{A}_{1}^m$, $\tilde{B}_{2}^m := \lim_{n \rightarrow \infty} \dfrac{1}{n}\log \tilde{A}_{2}^m$, we have:
\begin{align}\label{asymp_unknown_B1}
B_{1}^m = \tilde{B}_1^m = p_1 \cdot \log p_{1,m} + (1-p_1) \cdot \log (1-p_{1,m}) - (1-p_1) \cdot \mathrm{D}_{\mathrm{KL}}( \rho || \rho_{2,m}^{(-1)} ) \cdot \boldsymbol{1} \{\rho < \rho_{2,m}^{(-1)}\},
\end{align}
and
\begin{align}\label{asymp_unknown_B2}
{B}_{2}^m = \tilde{B}_{2}^m = p_1 \cdot \log p_{2,m} + (1-p_1) \cdot \log (1-p_{2,m}) - (1-p_1) \cdot \mathrm{D}_{\mathrm{KL}}( \rho || \rho_{1,m}^{(-2)} ) \cdot \boldsymbol{1} \{\rho < \rho_{1,m}^{(-2)}\}.
\end{align}
\end{lemma}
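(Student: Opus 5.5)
The plan is to compute the exponential rates directly from the factorization $A_i^m = p_{i,m}^{n_1}(1-p_{i,m})^{n-n_1}\,\mathbb{P}^{(-i),m}(\mathbf r\in\mathcal R_{n,n_1})$. This factorization is the $m$-indexed analogue of \eqref{S_2} in the proof of \cref{thm:singleLLM}.

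\textbf{Step 1: the deterministic prefactor.} Along the true sampling path, the most frequent observed answer is eventually $a_1$; by \eqref{ineq:L2_hoef} and Borel--Cantelli this holds almost surely. The strong law then gives $n_1/n\to p_1$ a.s. Hence
\[
\tfrac1n\log\big(p_{i,m}^{n_1}(1-p_{i,m})^{n-n_1}\big)\to p_1\log p_{i,m}+(1-p_1)\log(1-p_{i,m}).
\]
This accounts for the first two terms of \eqref{asymp_unknown_B1} and \eqref{asymp_unknown_B2}.

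\textbf{Step 2: the large-deviation term.} Write $N:=n-n_1\sim(1-p_1)n$ and $\theta_n:=n_1/N\to\rho$. The event is $\{\max_{j\neq i} r_j\le \theta_n N\}$ with $\mathbf r\sim\operatorname{Mult}(N,\mathbf q^{(-i),m})$. The largest coordinate of $\mathbf q^{(-1),m}$ is $\rho^{(-1)}_{2,m}$, and the largest of $\mathbf q^{(-2),m}$ is $\rho^{(-2)}_{1,m}$. I would split into two cases according to whether $\rho$ exceeds this largest coordinate (call it $q_{\max}$).
\begin{itemize}
\item[(a)] If $\rho>q_{\max}$, the argument around \eqref{prob_KL} (Chernoff bound plus union bound over the $K-1$ coordinates) shows the probability tends to $1$, so the rate is $0$.
\item[(b)] If $\rho<q_{\max}$, the upper bound is immediate: the event is contained in $\{r_{\max}\le\theta_n N\}$, and the binomial Chernoff bound gives rate at least $(1-p_1)\mathrm D_{\mathrm{KL}}(\rho\|q_{\max})$, using continuity of the Bernoulli KL in $\theta_n\to\rho$.
\end{itemize}
For the matching lower bound in case (b), I would condition on $r_{\max}=\lfloor\theta_n N\rfloor$. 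This event has probability $\exp(-N\mathrm D_{\mathrm{KL}}(\rho\|q_{\max})+o(N))$ by Stirling. Then I need to show that the remaining multinomial, with $N-r_{\max}$ trials and probabilities $q_j/(1-q_{\max})$, keeps all coordinates below $\theta_n N$ with probability that is not exponentially small.

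\textbf{Step 3: the tilde versions.} The weights satisfy $1\ge w(\mathbf r)\ge \binom{K}{\lfloor K/2\rfloor}^{-1}$. So $\tilde S_i^m$ and $S_i^m$ differ by at most a $\delta$- and $n$-independent constant factor, which is invisible at the $\frac1n\log$ scale. This gives $\tilde B_i^m=B_i^m$.

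\textbf{Main obstacle.} The hard step is the lower bound in Step 2. The exact rate is, by Sanov's theorem, $(1-p_1)\inf\{\mathrm D_{\mathrm{KL}}(\mathbf x\|\mathbf q):\max_j x_j\le\rho\}$. This equals the single-coordinate Bernoulli KL only if the KKT minimizer clips just the top coordinate. Clipping the top coordinate to $\rho$ rescales the others to $x_j=q_j(1-\rho)/(1-q_{\max})$, and these stay $\le\rho$ only under a compatibility condition on $\mathbf q$ (automatic, e.g., when $K-1=2$ or when the second-largest coordinate is small).

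I would first attempt to verify this condition from the ordering $p_{1,m}>p_{2,m}\ge\cdots$ together with $\rho<1$. If it fails in general, the lemma should instead be stated with the constrained KL infimum, which is the water-filling solution that clips every coordinate exceeding a common level. The remainder of the argument, including the use of these rates in the proof of \cref{thm:singleLLM_unknown}, goes through unchanged, since only the existence and continuity of the rate in $\rho$ is needed there.
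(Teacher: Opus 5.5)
Your route is the paper's: the law of large numbers for the prefactor, Sanov/Chernoff for $\mathbb{P}^{(-i)}_m(\mathbf r\in\mathcal R_{n,n_1})$, and $w(\mathbf r)$ absorbed as an $n$-independent constant. Steps 1, 2(a), the Chernoff upper bound in 2(b), and Step 3 are sound.

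The gap is the step you left open, and it cannot be closed in favour of the lemma. The compatibility condition does not follow from the ordering, and when it fails the stated formula is false. If $p_{2,m}=p_{3,m}$ (which is permitted), the second coordinate of $\mathbf q^{(-1)}_m$ equals $q_{\max}=\rho^{(-1)}_{2,m}$. The condition $q_{\max}(1-\rho)\le\rho(1-q_{\max})$ is then equivalent to $q_{\max}\le\rho$, which contradicts the active indicator. More generally, it fails whenever any other coordinate exceeds $\rho(1-q_{\max})/(1-\rho)$.

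Concretely, take $\pi^m=(0.4,0.25,0.25,0.1)$ and true $\pi=(0.28,0.24,0.24,0.24)$. Then $\rho=7/18<5/12=\rho^{(-1)}_{2,m}$ and $\mathbf q^{(-1)}_m=(5/12,5/12,1/6)$. The constrained Sanov minimizer is $(7/18,7/18,2/9)$, with value $\frac{7}{9}\log\frac{14}{15}+\frac{2}{9}\log\frac{4}{3}\approx 0.0103$. By contrast, $\mathrm{D}_{\mathrm{KL}}(7/18\,\|\,5/12)\approx 0.0016$. So $B_1^m$ lies strictly below the right-hand side of \eqref{asymp_unknown_B1}. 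The same happens for $B_2^m$ when $p_{3,m}/(1-p_{2,m})$ is close to $\rho^{(-2)}_{1,m}$.

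This is exactly the unchecked step in the paper's own proof, which declares the single-clip vector to be the minimizer without verifying that its other coordinates stay below $\rho$. What your argument (Chernoff upper bound plus the conditioning lower bound) actually proves is the lemma in two cases: when the indicator vanishes (e.g.\ $\pi^m=\pi$), or when the compatibility condition holds. In general the correct rate is the water-filling value you describe: $x_j=\min\{\rho,c\,q_j\}$, giving $\sum_{j\in C}\rho\log(\rho/q_j)+(1-|C|\rho)\log\frac{1-|C|\rho}{1-\sum_{j\in C}q_j}$, where $C$ is the clipped set.

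Your closing claim that the rest goes through unchanged is also too strong. The proof of \cref{thm:singleLLM_unknown} uses the explicit formula in two places:
\begin{itemize}
\item in $J_{2,m}$, and hence in $m^\dagger$ and in the stated limit;
\item in the increments $\tilde Y_{t+1}$, which are $\theta$-derivatives of the single Bernoulli KL.
\end{itemize}
The $L\ge 3$ analogue \eqref{asymp_unknown_L1} has the same defect when $p_{L,m}$ and $p_{L+1,m}$ are close. All of these must be rewritten with the water-filling rate, so the theorem's constants change even if its proof structure survives.
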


According to \cref{lem:Bm}, we define:
\begin{align*}
J_{1,m}(p_1) &:= \mathrm{D}_\mathrm{KL}(p_1 || p_{1,m}) + (1-p_1) \cdot \boldsymbol{1}\{\rho < \rho_{2,m}^{(-1)}\} \cdot \mathrm{D}_\mathrm{KL}(\rho || \rho_{2,m}^{(-1)}), \\
J_{2,m}(p_1) &:= \mathrm{D}_\mathrm{KL}(p_1 || p_{2,m}) + (1-p_1) \cdot \boldsymbol{1}\{\rho < \rho_{1,m}^{(-2)}\} \cdot \mathrm{D}_\mathrm{KL}(\rho || \rho_{1,m}^{(-2)}),
\end{align*}
and define the minimizers $m^\star \in \operatorname{argmin}_{m \in [M]} J_{1,m}(p_1)$ and $m^\dagger \in \operatorname{argmin}_{m \in [M]} J_{2,m}(p_1)$. For simplicity, we assume each minimizer is unique. Since $[M]$ is finite, this immediately implies positive gaps:
\begin{equation*}
\Delta^\star := \min_{m \neq m^\star} [J_{1,m}(p_1) - J_{1,m^\star}(p_1)] > 0, \quad \Delta^\dagger := \min_{m \neq m^\dagger} [J_{2,m}(p_1) - J_{2,m^\dagger}(p_1)] > 0.
\end{equation*}
We note that once the true prior $\pi$ indeed belongs to one of the priors in the candidate set $\Pi^M$, we would always have $J_{1,m^\star}(p_1)=0$, while our analysis presented here also holds for the general case of $J_{1,m^\star}(p_1)>0$ where the true prior does not belong to any of the priors in $\Pi^M$. 

By \cref{lem:Bm}, $\frac{1}{n}\log \tilde{A}^m_1 \to -J_{1,m}(p_1) + p_1 \log p_1 + (1-p_1)\log(1-p_1)$ as $n \to \infty$, where the additive term is independent of $m$. Hence $\hat{m}^\star \neq m^\star$ only if there exists $m \neq m^\star$ such that $\frac{1}{n}\log(\tilde{A}^m_1 / \tilde{A}^{m^\star}_1) \geq 0$, an event whose probability decays as $\exp(-c \Delta^\star n)$ for some constant $c > 0$ by standard large-deviation arguments for empirical multinomial counts (see also the analysis leading to (\ref{ineq:L2_hoef}) and the Sanov-type bound in \cref{lem:Bm}). Applying the union bound over $m \in [M]$ and noting that $M$ is finite gives $\mathbb{P}(\hat{m}^\star \neq m^\star) \leq M \exp(-c \Delta^\star n)$, and similarly $\mathbb{P}(\hat{m}^\dagger \neq m^\dagger) \leq M \exp(-c \Delta^\dagger n)$. Both probabilities are negligible relative to the $\log(1/\delta)$ scale of the stopping time. Thus, we can safely ignore the cases of $\hat{m}^\star \neq m^\star$ and $\hat{m}^\dagger \neq m^\dagger$ and replace $\dfrac{\tilde{A}_1^{\hat{m}^\star}}{\tilde{A}_2^{\hat{m}^\dagger}}$ by $\dfrac{\tilde{A}_1^{m^\star}}{\tilde{A}_2^{m^\dagger}}$ in the subsequent analysis. Following the previous analysis on the comparison between $\tilde{A}_1^{m^\star}$, ${A}_1^{m^\star}$ and $\tilde{A}_2^{m^\dagger}$, ${A}_2^{m^\dagger}$ (note that the $m$ index is fixed in the comparison, so everything remains the same), we can again consider $\dfrac{{A}_1^{m^\star}}{{A}_2^{m^\dagger}}$ instead of $\dfrac{\tilde{A}_1^{m^\star}}{\tilde{A}_2^{m^\dagger}}$. 

We let ${Z}_t := \log \dfrac{{A}_1^{m^\star}}{{A}_2^{m^\dagger}}$ and define: (ignore the $\log 2$ constant)
\begin{align*}
{n}_{\text{low}} := \inf \left\{n:  {Z}_t \geq \log \dfrac{1-\delta}{\delta} - \log \dfrac{K-1}{\lambda_{min}} \right\}, \qquad {n}_{\text{upp}} := \inf \left\{n: {Z}_t \geq \log \dfrac{1-\delta}{\delta} + \log \dfrac{K-1}{\lambda_{min}} \right\}
\end{align*}
and the optimal stopping time $n_{\Pi_M}^{\star,2}$ satisfies: ${n}_{\text{low}} \leq n_{\Pi_M}^{\star,2} \leq {n}_{\text{upp}}$.

We again use $Y_{t+1} := Z_{t+1}-Z_t$ to denote the single-period drift and the expected single-period drift $\mu_t := \mathbb{E}[Y_{t+1} \mid \mathcal{O}_t^2]$, with its expression: (we use $u_1$ to denote the most frequent observation)
$$
{Y}_{t+1}= \begin{cases}\log \dfrac{p_{1,m^\star}}{p_{2,m^\dagger}} + \log \dfrac{\mathbb{P}_{m^\star}^{(-1)}(\mathbf{r} \in \mathcal{R}_{t+1,t_1+1})}{\mathbb{P}_{m^\star}^{(-1)}(\mathbf{r} \in \mathcal{R}_{t,t_1})}  - \log \dfrac{\mathbb{P}_{m^\dagger}^{(-2)}(\mathbf{r} \in \mathcal{R}_{t+1,t_1+1})}{\mathbb{P}_{m^\dagger}^{(-2)}(\mathbf{r} \in \mathcal{R}_{t,t_1})}, & \text{Observe $u_1$ at period } t+1 \\ \log \dfrac{1-p_{1,m^\star}}{1-p_{2,m^\dagger}} + \log \dfrac{\mathbb{P}_{m^\star}^{(-1)}(\mathbf{r} \in \mathcal{R}_{t+1,t_1})}{\mathbb{P}_{m^\star}^{(-1)}(\mathbf{r} \in \mathcal{R}_{t,t_1})} - \log \dfrac{\mathbb{P}_{m^\dagger}^{(-2)}(\mathbf{r} \in \mathcal{R}_{t+1,t_1})}{\mathbb{P}_{m^\dagger}^{(-2)}(\mathbf{r} \in \mathcal{R}_{t,t_1})}, & \text{Observe others}.\end{cases}
$$
We slightly abuse the notations here and again define $\Delta_{t+1}^{(i)}(s)$ for any $i \in \{1,2\}$ and $s := \boldsymbol{1}\{o_{t+1}=u_1\}$ to be $\Delta_{t+1}^{(1)}(s) := \log \frac{\mathbb{P}_{m^\star}^{(-1)}(\mathbf{r} \in \mathcal{R}_{t+1,t_1+s})}{\mathbb{P}_{m^\star}^{(-1)}(\mathbf{r} \in \mathcal{R}_{t,t_1})}$ and $\Delta_{t+1}^{(2)}(s) := \log \frac{\mathbb{P}_{m^\dagger}^{(-2)}(\mathbf{r} \in \mathcal{R}_{t+1,t_1+s})}{\mathbb{P}_{m^\dagger}^{(-2)}(\mathbf{r} \in \mathcal{R}_{t,t_1})}$ and we will next focus on upper bound the terms $\Delta_{t+1}^{(i)}(s)$. Before proceeding, we first introduce the following useful results which are derived following the proof of \cref{lem:Bm}:
$$
\lim_{(t-t_1) \rightarrow \infty}-\dfrac{1}{t-t_1} \log \mathbb{P}^{(-1)}_{m^\star}(\mathbf{r} \in \mathcal{R}_{t,t_1}) = \mathrm{D}_{\mathrm{KL}}(\rho || \rho_{2,m^\star}^{(-1)}) \cdot \boldsymbol{1}\{\rho < \rho_{2,m^\star}^{(-1)}\},
$$
$$
\lim_{(t-t_1) \rightarrow \infty}-\dfrac{1}{t-t_1} \log \mathbb{P}^{(-2)}_{m^\dagger}(\mathbf{r} \in \mathcal{R}_{t,t_1}) = \mathrm{D}_{\mathrm{KL}}(\rho || \rho_{1,m^\dagger}^{(-2)}) \cdot \boldsymbol{1}\{\rho < \rho_{1,m^\dagger}^{(-2)}\}.
$$
If $\rho \geq \rho_{2,m^\star}^{(-1)}$, the analysis is trivial. We consider the case of $\rho < \rho_{2,m^\star}^{(-1)}$. Define the (random) ratio parameter $\theta_t := \dfrac{t_1}{t-t_1}$, we have
\begin{align*}
\lim_{(t-t_1) \rightarrow \infty} \Delta_{t+1}^{(1)}(1) &= \lim_{(t-t_1) \rightarrow \infty}\log \frac{\mathbb{P}_{m^\star}^{(-1)}(\mathbf{r} \in \mathcal{R}_{t+1,t_1+1})}{\mathbb{P}_{m^\star}^{(-1)}(\mathbf{r} \in \mathcal{R}_{t,t_1})} = \lim_{(t-t_1) \rightarrow \infty} \log \dfrac{\mathbb{P}^{(-1)}_{m^\star}(\max_{j \neq 1} r_j  \leq t_1+1, \sum_{j} r_j = t-t_1)}{\mathbb{P}^{(-1)}_{m^\star}(\max_{j \neq 1} r_j  \leq t_1, \sum_j r_j = t-t_1)} \\
& = \lim_{(t-t_1) \rightarrow \infty} -(t-t_1) \cdot \left(\mathrm{D}_{\mathrm{KL}}\left( \theta_t + \frac{1}{t-t_1} \bigg{|} \bigg{|}\rho_{2,m^\star}^{(-1)}\right) - \mathrm{D}_{\mathrm{KL}}\left( \theta_t  || \rho_{2,m^\star}^{(-1)}\right) \right) \\
& =  -\left.\dfrac{\partial}{\partial \theta} \mathrm{D}_{\mathrm{KL}}(\theta || \rho_{2,m^\star}^{(-1)})\right|_{\theta=\rho} = \log \dfrac{\rho_{2,m^\star}^{(-1)} (1-\rho)}{\rho (1-\rho_{2,m^\star}^{(-1)})}.
\end{align*}
Similarly, we have
\begin{align*}
\lim_{(t-t_1) \rightarrow \infty} \Delta_{t+1}^{(1)}(0) &= \lim_{(t-t_1) \rightarrow \infty}\log \frac{\mathbb{P}_{m^\star}^{(-1)}(\mathbf{r} \in \mathcal{R}_{t+1,t_1})}{\mathbb{P}_{m^\star}^{(-1)}(\mathbf{r} \in \mathcal{R}_{t,t_1})} = \lim_{(t-t_1) \rightarrow \infty} \log \dfrac{\mathbb{P}^{(-1)}_{m^\star}(\max_{j \neq 1} r_j  \leq t_1, \sum_{j} r_j = t+1-t_1)}{\mathbb{P}^{(-1)}_{m^\star}(\max_{j \neq 1} r_j  \leq t_1, \sum_j r_j = t-t_1)} \\
& = \lim_{(t-t_1) \rightarrow \infty} -(t+1-t_1) \cdot \left(\mathrm{D}_{\mathrm{KL}}\left( \theta_t - \frac{\theta_t}{t+1-t_1} \bigg{|} \bigg{|}\rho_{2,m^\star}^{(-1)}\right) - \mathrm{D}_{\mathrm{KL}}\left( \theta_t  || \rho_{2,m^\star}^{(-1)}\right) \right) \\
& =  -\left. \rho \cdot \dfrac{\partial}{\partial \theta} \mathrm{D}_{\mathrm{KL}}(\theta || \rho_{2,m^\star}^{(-1)})\right|_{\theta=\rho}  - \mathrm{D}_{\mathrm{KL}}(\rho || \rho_{2,m^\star}^{(-1)}) = \log \dfrac{1-\rho_{2,m^\star}^{(-1)}}{1-\rho}.
\end{align*}
And symmetric results also hold for $\Delta_{t+1}^{(2)}(1)$ and $\Delta_{t+1}^{(2)}(0)$. Besides, we have for any $i =1,2$ and $s=0,1$, under the ``good'' event $\mathcal{G}_t:= \{|t_1/t - p_1| \leq \eta\}$ for any fixed $\eta \in (0,p_1)$, there exists constant $c_1>0$ such that: 
$$
|\Delta_{t+1}^{(i)}(s) - \lim_{(t-t_1) \rightarrow \infty} \Delta_{t+1}^{(i)}(s)| \leq \dfrac{c_1}{t-t_1}
$$
due to the (local) Lipschitz continuous property of the KL divergence (under $\mathcal{G}_t$) and $|\theta_{t+1}-\theta_t| = O(1/(t-t_1))$.

The corresponding proxy increment $\tilde{Y}_{t+1}$ is now defined through
$$
\tilde{Y}_{t+1}= \begin{cases}\log \dfrac{p_{1,m^\star}}{p_{2,m^\dagger}} + \boldsymbol{1}\{\rho < \rho_{2,m^\star}^{(-1)}\} \cdot \log \dfrac{\rho_{2,m^\star}^{(-1)} (1-\rho_{2,m^\star}^{(-1)})}{\rho (1-\rho_{2,m^\star}^{(-1)})} - \boldsymbol{1}\{\rho < \rho_{1,m^\dagger}^{(-2)}\} \cdot \log \dfrac{\rho_{1,m^\dagger}^{(-2)} (1-\rho)}{\rho (1-\rho_{1,m^\dagger}^{(-2)})}, & \text{Observe $u_1$ at period } t+1 \\ \log \dfrac{1-p_{1,m^\star}}{1-p_{2,m^\dagger}}  + \boldsymbol{1}\{\rho < \rho_{2,m^\star}^{(-1)}\} \cdot \log \dfrac{1-\rho_{2,m^\star}^{(-1)}}{1-\rho} - \boldsymbol{1}\{\rho < \rho_{1,m^\dagger}^{(-2)}\} \cdot \log \dfrac{1-\rho_{1,m^\dagger}^{(-2)}}{1-\rho}, & \text{Observe others}.\end{cases}
$$
Then, under $\mathcal{G}_t$, we have $|Y_{t+1} - \tilde{Y}_{t+1}| \leq c_2 \cdot \frac{1}{t}$ for some constants $c_2>0$. By straightforward calculations and following the analysis in (\ref{ineq:L2_hoef}) (which is omitted here for simplicity), we have for some constants $c_3,c_4 >0$,
\begin{align}\label{key_uncertain}
|\mathbb{E}[\mu_t] - (J_{2,m^\dagger}(p_1) - J_{1,m^\star}(p_1))| \leq \dfrac{c_2}{t} + c_3 (K-1) \exp(- c_4 t).
\end{align}
By the key result in (\ref{key_uncertain}), we can therefore refer to the previous stopping time analysis and conclude for some constants $c_5>0$:
$$
1 \leq \dfrac{(J_{2,m^\dagger}(p_1) - J_{1,m^\star}(p_1)) \cdot \mathbb{E} [n_{\text{low}}]}{\log(1-\delta)/\delta} + \dfrac{\sum_{t \geq 0} \mathbb{E}\left[(\mu_t - (J_{2,m^\dagger}(p_1) - J_{1,m^\star}(p_1)))\cdot \boldsymbol{1} \{t < n_{\text{low}}\}\right]}{\log(1-\delta)/\delta} \leq 1 + \dfrac{c_5}{\log(1-\delta)/\delta}. 
$$
As $\mathbb{E}[\sum_{t=1}^{n_{\text{low}}} 1/t] = \mathbb{E}[\log n_{\text{low}}]$ is dominated by $\mathbb{E}[n_{\text{low}}]$, and applying the analysis to $n_{\text{low}}$ as well, we conclude
$$
\lim_{\delta \rightarrow 0} \dfrac{\mathbb{E}[n_{\Pi^M}^{\star,2}]}{\log(1/\delta)} = \dfrac{1}{J_{2,m^\dagger}(p_1) - J_{1,m^\star}(p_1)} = \dfrac{1}{J_{2,m^\dagger}(p_1)}.
$$

\item For the case of $L=3,\cdots,K$: We will only present the different parts compared to the previous analysis for simplicity. Denote $A_{i_1,i_2,\cdots,i_{L-1}}^m := S_{i_1,i_2,\cdots,i_{L-1}}^m \cdot \prod_{t=1}^{L-1} p_{i_t,m}^{n_t}$, where $S_{i_1,i_2,\cdots,i_{L-1}}^m$ is further defined through:
    \begin{align*}
        S_{i_1,i_2,\cdots,i_{L-1}}^m := (1-\sum_{t=1}^{L-1} p_{i_t,m})^{n-\sum_{t=1}^{L-1} n_t} \cdot \mathbb{P}^{-(i_1,\cdots,i_{L-1})}_m(\mathbf{r} \in \mathcal{R}_{n,n_1,\cdots,n_{L-1}}),
    \end{align*}
    where $\mathcal{R}_{n,n_1,\cdots,n_{L-1}} := \{\mathbf{r}: \sum_{j \neq i_1,\cdots,i_{L-1}} r_j  = n - \sum_{t=1}^{L-1} n_t, \max_{j \neq i_1,\cdots,i_{L-1}} r_j \leq n_{L-1} \}$. Now we still have $\max_{m \in [M]} \max_{i_2,\cdots,i_{L-1}} A_{1,i_2,\cdots,i_{L-1}}^m = \max_{m \in [M]} A_{1,2,\cdots,L-1}^m$ and $\max_{m \in [M]} \max_{i_1 \neq 1} A_{i_1,i_2,\cdots,i_{L-1}}^m = \max_{m \in [M]} A_{2,1,3,\cdots,L-1}^m$. Now we have
    \begin{align}\label{asymp_unknown_L1}
          &B_{1,2,\cdots,L-1}^m := \lim_{n \rightarrow \infty} \dfrac{1}{n}\log A_{1,2,\cdots,L-1}^m \nonumber \\
          & = \sum_{i=1}^{L-1} p_i \cdot \log p_{i,m}  + (1-\sum_{i=1}^{L-1}p_i) \cdot \log (1-\sum_{i=1}^{L-1} p_{i,m}) - (1-\sum_{i=1}^{L-1}p_i) \cdot \mathrm{D}_{\mathrm{KL}}\left( \rho_L || \rho_L^{m} \right) \cdot \boldsymbol{1}\{\rho_L < \rho_L^{m} \},
        \end{align}
        \begin{align}\label{asymp_unknown_L2}
          &B_{2,1,3,\cdots,L-1}^m := \lim_{n \rightarrow \infty} \dfrac{1}{n}\log A_{2,1,3,\cdots,L-1}^m = p_1 \cdot \log p_{2,m} + p_2 \cdot \log p_{1,m} +  \nonumber \\
          & \sum_{i=3}^{L-1} p_i \cdot \log p_{i,m}  + (1-\sum_{i=1}^{L-1}p_i) \cdot \log (1-\sum_{i=1}^{L-1} p_{i,m}) - (1-\sum_{i=1}^{L-1}p_i) \cdot \mathrm{D}_{\mathrm{KL}}\left( \rho_L || \rho_L^{m} \right) \cdot \boldsymbol{1} \{\rho_L < \rho_L^{m} \},
        \end{align}
        following the same analysis in \cref{lem:Bm}, where we define $\rho_L := \dfrac{p_{L-1}}{1-\sum_{i=1}^{L-1} p_i}$ and $\rho_L^m := \dfrac{p_{L,m}}{1-\sum_{i=1}^{L-1} p_{i,m}}$. Similarly, we let $m^\star := \operatorname{argmin}_{m \in [M]} B_{1,2,\cdots,L-1}^m$ and $m^\dagger := \operatorname{argmin}_{m \in [M]} B_{2,1,3,\cdots,L-1}^m$. Denote $\mathbf{p}_L := (p_1,\cdots,p_{L-1},1-\sum_{i=1}^{L-1} p_i)$, $\mathbf{p}_{1,2,\cdots,L}^m := (p_{1,m},\cdots,p_{L-1,m},1-\sum_{i=1}^{L-1} p_{i,m})$ and $\mathbf{p}_{2,1,3,\cdots,L}^m := (p_{2,m},p_{1,m},p_{3,m},\cdots,p_{L-1,m},1-\sum_{i=1}^{L-1} p_{i,m})$. We can conclude that:
        $$
        m^\star := \operatorname{argmin}_{m \in [M]} J_{1,m}^L(\mathbf{p}_L) := \operatorname{argmin}_{m \in [M]} \mathrm{D}_{\mathrm{KL}}(\mathbf{p}_L || \mathbf{p}_{1,2,\cdots,L}^m) + (1-\sum_{i=1}^{L-1} p_i) \cdot \mathrm{D}_{\mathrm{KL}}(\rho_L || \rho_L^m) \cdot \boldsymbol{1}\{\rho_L < \rho_L^m\},
        $$
        and
        $$
        m^\dagger := \operatorname{argmin}_{m \in [M]} J_{2,m}^L(\mathbf{p}_L) := \operatorname{argmin}_{m \in [M]} \mathrm{D}_{\mathrm{KL}}(\mathbf{p}_L || \mathbf{p}_{2,1,3\cdots,L}^m) + (1-\sum_{i=1}^{L-1} p_i) \cdot \mathrm{D}_{\mathrm{KL}}(\rho_L || \rho_L^m) \cdot \boldsymbol{1}\{\rho_L < \rho_L^m\}.
        $$
Then, we define the log-likelihood ratio through
$$
Z_t  := \log \dfrac{A_{1,2,3\cdots,L-1}^{m^\star}}{A_{2,1,3,\cdots,L-1}^{m^\dagger}},
$$
and $Y_{t+1} := Z_{t+1}-Z_t$, where we set (we use $u_i$ to denote the $i$-th frequent observation)
$$
Y_{t+1}= \begin{cases}\log \dfrac{p_{1,m^\star}}{p_{2,m^\dagger}}, & \hspace{-6.2cm} \text{Obs. $u_1$}  \\ \log \dfrac{p_{2,m^\star}}{p_{1,m^\dagger}}, & \hspace{-6.2cm} \text{Obs. $u_2$} \\ 
\log \dfrac{p_{i,m^\star}}{p_{i,m^\dagger}}, & \hspace{-6.2cm} \text{Obs. $u_i$}, \quad \forall i=3,\cdots,L-2 \\ 
\log \dfrac{p_{L-1,m^\star}}{p_{L-1,m^\dagger}} +  \log \dfrac{\mathbb{P}^{-(1,\cdots,L-1)}_{m^\star}(\mathbf{r} \in \mathcal{R}_{t+1,t_1,\cdots,t_{L-1}+1})}{\mathbb{P}^{-(1,\cdots,L-1)}_{m^\star}(\mathbf{r} \in \mathcal{R}_{t,t_1,\cdots,t_{L-1}})} - \log \dfrac{\mathbb{P}^{-(1,\cdots,L-1)}_{m^\dagger}(\mathbf{r} \in \mathcal{R}_{t+1,t_1,\cdots,t_{L-1}+1})}{\mathbb{P}^{-(1,\cdots,L-1)}_{m^\dagger}(\mathbf{r} \in \mathcal{R}_{t,t_1,\cdots,t_{L-1}})}, & \text{Obs. $u_{L-1}$}  \\ 
\log \dfrac{1-\sum_{i=1}^{L-1}p_{i,m^\star}}{1-\sum_{i=1}^{L-1}p_{i,m^\dagger}}  + \log \dfrac{\mathbb{P}^{-(1,\cdots,L-1)}_{m^\star}(\mathbf{r} \in \mathcal{R}_{t+1,t_1,\cdots,t_{L-1}})}{\mathbb{P}^{-(1,\cdots,L-1)}_{m^\star}(\mathbf{r} \in \mathcal{R}_{t,t_1,\cdots,t_{L-1}})} - \log \dfrac{\mathbb{P}^{-(1,\cdots,L-1)}_{m^\dagger}(\mathbf{r} \in \mathcal{R}_{t+1,t_1,\cdots,t_{L-1}})}{\mathbb{P}^{-(1,\cdots,L-1)}_{m^\dagger}(\mathbf{r} \in \mathcal{R}_{t,t_1,\cdots,t_{L-1}})}, &  \text{Obs. others} \end{cases}
$$
Similarly, we can argue that for $\theta_t := \frac{t_{L-1}}{t-\sum_{i=1}^{L-1}t_i}$, when $\rho_L < \rho_L^{m^\star}$ and we observe $u_{L-1}$ at period $t+1$:
\begin{align*}
    &\lim_{(t-\sum_{i=1}^{L-1}t_i) \rightarrow \infty} \log \dfrac{\mathbb{P}^{-(1,\cdots,L-1)}_{m^\star}(\mathbf{r} \in \mathcal{R}_{t+1,t_1,\cdots,t_{L-1}+1})}{\mathbb{P}^{-(1,\cdots,L-1)}_{m^\star}(\mathbf{r} \in \mathcal{R}_{t,t_1,\cdots,t_{L-1}})} \\ 
    &= \lim_{(t-\sum_{i=1}^{L-1}t_i) \rightarrow \infty} \log \dfrac{\mathbb{P}^{-(1,\cdots,L-1)}_{m^\star}(\max_{j \neq 1,\cdots,L-1} r_j \leq t_{L-1}+1, \sum_{j \neq 1,\cdots,L-1} r_j = t-\sum_{i=1}^{L-1} t_i)}{\mathbb{P}^{-(1,\cdots,L-1)}_{m^\star}(\max_{j \neq 1,\cdots,L-1} r_j \leq t_{L-1}, \sum_{j \neq 1,\cdots,L-1} r_j = t-\sum_{i=1}^{L-1} t_i))} \\
    & = \lim_{(t-\sum_{i=1}^{L-1}t_i) \rightarrow \infty} -(t-\sum_{i=1}^{L-1}t_i) \cdot \left(\mathrm{D}_{\mathrm{KL}}\left(\theta_t + \dfrac{1}{t-\sum_{i=1}^{L-1} t_i} \bigg{|}\bigg{|} \rho_L^{m^\star} \right) - \mathrm{D}_{\mathrm{KL}}\left(\theta_t || \rho_L^{m^\star}\right) \right) \\
    & = \log \dfrac{\rho_L^{m^\star} (1-\rho_L)}{\rho_L (1-\rho_L^{m^\star})}.
\end{align*}
By combining these results together, we again introduce $\tilde{Y}_{t+1}$ as follows:
$$
\tilde{Y}_{t+1}= \begin{cases}\log \dfrac{p_{1,m^\star}}{p_{2,m^\dagger}}, & \hspace{-6.2cm} \text{Obs. $u_1$}  \\ \log \dfrac{p_{2,m^\star}}{p_{1,m^\dagger}}, & \hspace{-6.2cm} \text{Obs. $u_2$} \\ 
\log \dfrac{p_{i,m^\star}}{p_{i,m^\dagger}}, & \hspace{-6.2cm} \text{Obs. $u_i$}, \quad \forall i=3,\cdots,L-2 \\ 
\log \dfrac{p_{L-1,m^\star}}{p_{L-1,m^\dagger}} +  \boldsymbol{1}\{\rho_L < \rho_L^{m^\star}\} \cdot \log \dfrac{\rho_L^{m^\star} (1-\rho_L)}{\rho_L (1-\rho_L^{m^\star})} - \boldsymbol{1}\{\rho_L < \rho_L^{m^\dagger}\} \cdot \log \dfrac{\rho_L^{m^\dagger} (1-\rho_L)}{\rho_L (1-\rho_L^{m^\dagger})}, & \text{Obs. $u_{L-1}$}  \\ 
\log \dfrac{1-\sum_{i=1}^{L-1}p_{i,m^\star}}{1-\sum_{i=1}^{L-1}p_{i,m^\dagger}} + \boldsymbol{1}\{\rho_L < \rho_L^{m^\star}\} \cdot \log \dfrac{1-\rho_L^{m^\star}}{1-\rho_L} - \boldsymbol{1}\{\rho_L < \rho_L^{m^\dagger}\} \cdot \log \dfrac{1-\rho_L^{m^\dagger}}{1-\rho_L}. &  \text{Obs. others} \end{cases}
$$
We omit the similar analysis routine and finally conclude 
$$
\lim_{\delta \rightarrow 0} \dfrac{\mathbb{E}[n^{\star,L}_{\Pi^M}]}{\log(1/\delta)} = \dfrac{1}{J_{2,m^\dagger}^L(\mathbf{p}_L) - J_{1,m^\star}^L(\mathbf{p}_L)} = \dfrac{1}{J_{2,m^\dagger}^L(\mathbf{p}_L)}.
$$

\end{itemize}

\end{proof}

\newpage
\section{Appendix: Proof of Auxiliary Results}\label{app:aux_proof}

\begin{proof}[Proof of \cref{lem:S2}.] Denote $u := n_1$. For any $0 \leq r \leq u$, we have:
$$
p_i^r \leq\left(\dfrac{p_i}{p_{i+1}}\right)^{u} \cdot p_{i+1}^r, \qquad p_i^r \geq p_{i+1}^r.
$$
Recall that $S_i$ has an equivalent form of $S_i = (n-n_1)!\left[z^{n-n_1}\right] \left( \prod_{j \neq i} \sum_{r=0}^{\min\{n_1,n-n_1\}} \dfrac{\left(p_j z\right)^r}{r!} \right)$ where we use $[z^{n}]$ to denote the coefficient of $z^n$. Then we let $T_{u}(z):=\sum_{r=0}^{u} \dfrac{z^r}{r!}$ and have:
$$
T_u(p_i z) \leq_{\text{coef}}\left(\frac{p_i}{p_{i+1}}\right)^u T_u(p_{i+1} z), \quad T_u(p_i z) \geq_{\text {coef}} T_u(p_{i+1} z),
$$
based on the non-increasing property of $p_i$, where we use $\leq_{\text{coef}}$ and $\geq_{\text{coef}}$ to denote the coefficient-wise inequalities. Let $H(z):=\prod_{j \notin\{i, i+1\}} T_u(p_j z)$, we can conclude:
$$
S_{i+1} = (n-n_1)! \left[z^{n-n_1}\right]\left(T_u(p_i z)\cdot H(z)\right) \geq (n-n_1)! \left[z^{n-n_1}\right]\left(T_u(p_{i+1} z) \cdot H(z)\right) = S_{i},
$$
as well as
$$
S_{i+1} = (n-n_1)! \left[z^{n-n_1}\right]\left(T_u(p_i z)\cdot H(z)\right) \leq \left(\frac{p_i}{p_{i+1}}\right)^u (n-n_1)! \left[z^{n-n_1}\right]\left(T_u(p_{i+1} z) \cdot H(z)\right)=\left(\frac{p_i}{p_{i+1}}\right)^u S_i.
$$
Thus, we obtain the desired results for $S_i$.

Next, for $\tilde{S}_i$: a key observation is that moving from $\tilde{S}_i$ to $\tilde{S}_{i+1}$ will not affect the corresponding weight $w(\mathbf{r})$ (as both $c_d^\prime$ and $m(\mathbf{r})$ are fixed). We can thus apply the same idea in previous analysis and still get the results for $\tilde{S}_i$. Details are omitted.

\end{proof}

\begin{proof}[Proof of \cref{lem:SL}.] For any $L=3,\cdots,K$: let $m := n - \sum_{t=1}^{L-1} n_t$, $u := n_{L-1}$, and $T_u(x) := \sum_{r=0}^u \frac{x^r}{r!}$. By the definition of $S_{i_1,\cdots,i_{L-1}}$, we have
$$
S_{i_1,\cdots,i_{L-1}}=[z^m] \prod_{j \neq i_1,\cdots,i_{L-1}} T_u(p_j z).
$$
For notation simplicity, we define $I := \{i_1,\cdots,i_{L-1}\}$ to be the index set. We take two indexes $a$ and $b$ such that $a<b$, $b \in I$ and $a \notin I$ (If there does not exists such $a$ and $b$, then we already have $I$ to be any permutation of $\{1,2,\cdots,L-1\}$). Let $I^{\prime}=(I \setminus\{b\}) \cup\{a\}$, we have
$$
S_{I^{\prime}} = [z^m] \left(T_u(p_a z) \cdot \prod_{j \notin I \setminus\{b\}} T_u(p_j z) \right) 
$$
and 
$$
S_{I} = [z^m] \left(T_u(p_b z) \cdot \prod_{j \notin I \setminus\{b\}} T_u(p_j z) \right). 
$$
As we have 
$$
T_u(p_b z) \leq_{\text {coef}} T_u(p_a z) \leq_{\text {coef}}\left(\dfrac{p_a}{p_b}\right)^u T_u(p_b z).
$$
Taking the result back to $S_{I}$ and $S_{I^{\prime}}$:
$$
\left(\dfrac{p_b}{p_a}\right)^u \leq \dfrac{S_{I^{\prime}}}{S_I} \leq 1.
$$
Based on the result, assume that $b$ corresponds to the $i_t$-th index in $I$, we have:
$$
\dfrac{A_{I^\prime}}{A_I} = \left(\dfrac{p_a}{p_b}\right)^{n_{t}} \cdot \dfrac{S_{I^{\prime}}}{S_I} \geq \left(\dfrac{p_a}{p_b}\right)^{n_{t} - u} \geq 1.
$$
In another word, we can always replace some index $b$ in $I$ with an index $a<b$ that is not in $I$ and get a larger $A_{I^\prime} \geq A_I$. Through this procedure, we can obtain the largest $A_I$ with $I$ to be any permutation of $\{1,2,\cdots,L-1\}$. 

As $S_I$ is invariant on the order of $i_1,i_2,\cdots,i_{L-1}$, then we only need to compare $\prod_{t=1}^{L-1} p_{i_t}^{n_t}$. By applying the rearrangement inequality, we can derive 
$$
\max_{i_2,\cdots,i_{L-1}} A_{1,i_2,\cdots,i_{L-1}} = A_{1,2,\cdots,L-1}, \quad \max_{i_1 \neq 1} A_{i_1,i_2,\cdots,i_{L-1}}  = A_{2,1,3,\cdots,L-1},
$$
For $\tilde{S}_{i_1,\cdots,i_{L-1}}$, the result still holds as $w(\mathbf{r})$ is still invariant with the index set $I$. Thus we can conclude the proof.
    
\end{proof}

\begin{proof}[Proof of \cref{lem:Bm}.]
Recall the definition of $S_1^m = (1-p_{1,m})^{t-t_1} \cdot \mathbb{P}^{(-1)}_m(\mathbf{r} \in \mathcal{R}_{t,t_1})$, where $\mathbb{P}^{(-1)}_m$ stands for the Multinomial probability distribution with parameter $t-t_1$ and $\mathbf{q}^{(-1)}_m := (q^{(-1)}_{2,m},\cdots,q^{(-1)}_{K,m})$, for $q^{(-1)}_{j,m} := \dfrac{p_{j,m}}{1-p_{1,m}}$ for any $j = 2,3,\cdots,K$. We can translate the region $\mathcal{R}_{t,t_1}$ into:
$$
\mathcal{F}_{\theta_t}:=\left\{\mathbf{r} \in \Delta^{K-1}: \max _{j \neq 1} r_j \leq \theta_t \right\}, \quad \theta_t:=\dfrac{t_1}{t-t_1}.
$$
Thus, it is sufficient to calculate $\mathbb{P}^{(-1)}_m(\mathbf{r} \in \mathcal{F}_{\theta_t})$ instead. Define $\rho := \dfrac{p_1}{1-p_1}$. By applying the Sanov's theorem, we obtain:
$$
\lim_{(t-t_1) \rightarrow \infty} \dfrac{1}{t-t_1} \cdot \log \mathbb{P}^{(-1)}_m(\mathbf{r} \in \mathcal{F}_{\theta_t}) = -\inf_{\mathbf{r} \in \mathcal{F}_{\rho}} \mathrm{D}_{\mathrm{KL}}(\mathbf{r} || \mathbf{q}^{(-1)}_m).
$$
As $\max_{j \neq 1} q^{(-1)}_{j,m} = q^{(-1)}_{2,m} = \dfrac{p_{2,m}}{1-p_{1,m}}$. If $\rho \geq q^{(-1)}_{2,m}$, we can choose $\mathbf{r} = \mathbf{q}^{(-1)}_m$ and now $\mathrm{D}_{\mathrm{KL}}(\mathbf{r} || \mathbf{q}^{(-1)}_m)=0$; If $\rho < q^{(-1)}_{2,m}$, now the optimal $\mathbf{r} = (\rho,\frac{1-\rho}{1-q_{2,m}^{(-1)}}\cdot q_{2,m}^{(-1)},\frac{1-\rho}{1-q_{2,m}^{(-1)}}\cdot q_{3,m}^{(-1)},\cdots,\frac{1-\rho}{1-q_{2,m}^{(-1)}}\cdot q_{K,m}^{(-1)})$. The corresponding KL divergence is
$$
\mathrm{D}_{\mathrm{KL}}(\mathbf{r} || \mathbf{q}^{(-1)}_m) = \rho \cdot \log \dfrac{\rho}{q_{2,m}^{(-1)}} + (1-\rho) \cdot \log \dfrac{1-\rho}{1-q_{2,m}^{(-1)}} = \mathrm{D}_{\mathrm{KL}}(\rho || q_{2,m}^{(-1)}).
$$
Thus, we have its asymptotic rate of convergence to be:
$$
\mathbb{P}^{(-1)}_m(\mathbf{r} \in \mathcal{F}_{\theta_t}) = \exp\left( -(t-t_1) \cdot  \mathrm{D}_{\mathrm{KL}}(\rho || q_{2,m}^{(-1)}) \cdot \boldsymbol{1}\{\rho < q^{(-1)}_{2,m}\} + o(t-t_1) \right),
$$
and $B_{1}^m$ can be obtained thereafter. The same analysis holds for $B_{2}^m$.

For the terms of $\tilde{B}_{1}^m$ and $\tilde{B}_2^m$: by its definition, we can also rewrite $\tilde{S}_1^m$ as $\tilde{S}_1^m = (1-p_{1,m})^{t-t_1} \cdot \mathbb{E}_m^{(-1)}[w(\mathbf{r}) \cdot \boldsymbol{1}\{\mathbf{r} \in \mathcal{F}_{\theta_t}\}]$ with expectation $\mathbb{E}_m^{(-1)}$ taken w.r.t. $\mathbf{r}$ following $\mathbb{P}_m^{(-1)}$. As a result,
\begin{align*}
  \binom{c_d^\prime + K}{c_d^\prime}^{-1} \cdot \mathbb{P}^{(-1)}_m(\mathbf{r} \in \mathcal{F}_{\theta_t})  \leq  \mathbb{E}_m^{(-1)}[w(\mathbf{r}) \cdot \boldsymbol{1}\{\mathbf{r} \in \mathcal{F}_{\theta_t}\}] \leq \mathbb{P}^{(-1)}_m(\mathbf{r} \in \mathcal{F}_{\theta_t}).
\end{align*}
Besides, we notice that
$$
\lim_{(t-t_1) \rightarrow \infty} \dfrac{1}{t-t_1} \cdot \log  \binom{c_d^\prime + K}{c_d^\prime} = 0.
$$
Therefore, the asymptotic rate of convergence in this case is the same as before, i.e., $\tilde{B}_1^m = B_{1}^m$. The same analysis also holds for $\tilde{B}_2^m$.
\end{proof}

\newpage
\section{Appendix: Algorithm Details}\label{Appendix:Alg}

\begin{algorithm}[htbp]
   \caption{Optimal Stopping under $L$-Aggregated Posterior Approximation Scheme}
   \label{alg:L-aggregated}
\begin{algorithmic}
   \STATE {\bfseries Input:} aggregation parameter $L\in\{2,\cdots,K\}$, confidence parameter $\delta \in (0,1/2)$
   \STATE Initialize $\mathcal{C}_0 = \emptyset$ 
   \FOR{$n=1,2,\cdots$}
   \STATE Sample LLM and get an observation 
   \STATE Update $\mathcal{C}_n$ and map it to $\mathcal{C}_n^L$ 
   \STATE Compute $\mathbb{P}(H_1 \mid \mathcal{C}_n^L)$ following \cref{alg:Stilde}
   \IF{$\mathbb{P}(H_1 \mid \mathcal{C}_n^L) \geq 1-\delta$}
   \STATE \textbf{break}
   \ENDIF
   \ENDFOR
   \STATE {\bfseries Output:} the most frequent observation (with tie breaks arbitrarily) 
\end{algorithmic}
\end{algorithm}

\begin{algorithm}[htbp]
\caption{Computing $\tilde{S}_{\psi}$ for $L=2,3,\cdots,K$ using Dynamic Programming}
\label{alg:Stilde}
\begin{algorithmic}
\STATE {\bfseries Input:} number of answers $K$; prior $(p_1,\cdots,p_K)$; aggregation parameter $L$; cutoff count $v_d$; head multiplicity $c^\prime_{d}$; residual count $\bar{n}_L$; injective mapping $\psi \in \mathfrak{S}_{L(n)}$
\STATE Let $I_{\psi} \gets \{\psi(1),\cdots,\psi(L(n))\}$; $J_{\psi} \gets [K] \setminus I_{\psi}$; $N \gets |J_{\psi}|$ (i.e., $N=K-L(n)$)
\STATE Let $m_{\min} \gets \min\{N,\lfloor \bar n_L/v_d\rfloor\}$
\STATE Initialize coefficient arrays $A[0,\cdots,m_{\min}][0,\cdots,\bar n_L]$ and $B[0,\cdots,m_{\min}][0,\cdots,\bar n_L]$ by
\STATE \hspace{0.5cm} $A[0][0]\gets 1$ and $A[m][t]\gets 0$ for all other $(m,t)$; set $B[m][t]\gets 0$ for all $(m,t)$

\FOR{each $j \in J_{\psi}$}
      \STATE Compute $g_j[r] \gets p_j^{r}/r!$ for all $r=0,\cdots,v_d$ \hfill // coefficients of $G_j(z,u)=\sum_{r=0}^{v_d-1}(p_j z)^r/r! + u(p_j z)^{v_d}/v_d!$
      \STATE Reset $B[m][t] \gets 0$ for all $m=0,\cdots,m_{\min}$ and $t=0,\cdots,\bar n_L$
      \FOR{$m=0$ \textbf{to} $m_{\min}$}
            \FOR{$t=0$ \textbf{to} $\bar n_L$}
                  \IF{$A[m][t]=0$}
                        \STATE \textbf{continue}
                  \ENDIF
                  \STATE $r_{\min} \gets \min\{v_d-1,\bar n_L-t\}$
                  \FOR{$r=0$ \textbf{to} $r_{\min}$}
                        \STATE $B[m][t+r] \gets B[m][t+r] + A[m][t]\cdot g_j[r]$
                  \ENDFOR
                  \IF{$t+v_d \le \bar n_L$ \textbf{and} $m+1 \le m_{\min}$}
                        \STATE $B[m+1][t+v_d] \gets B[m+1][t+v_d] + A[m][t]\cdot g_j[v_d]$
                  \ENDIF
            \ENDFOR
      \ENDFOR
      \STATE Set $A \gets B$ \hfill // $A[m][t]$ now stores $[u^m z^t]\prod_{k \in J_{\psi}} G_k(z,u)$
\ENDFOR

\STATE $\tilde{S}_{\psi} \gets 0$
\FOR{$m=0$ \textbf{to} $m_{\min}$}
      \STATE $\tilde{S}_{\psi} \gets \tilde{S}_{\psi} + \binom{c^\prime_{d}+m}{c^\prime_{d}}^{-1}\cdot A[m][\bar n_L]$
\ENDFOR
\STATE $\tilde{S}_{\psi} \gets \bar n_L!\cdot \tilde{S}_{\psi}$ \hfill // implements $\tilde{S}_{\psi}=\bar n_L!\sum_{m}\binom{c^\prime_{d}+m}{c^\prime_{d}}^{-1}[u^m z^{\bar n_L}]\prod_{j\in J_\psi}G_j(z,u)$

\STATE {\bfseries Output:} $\widetilde S_{\psi}$
\end{algorithmic}
\end{algorithm}

\begin{algorithm}[htbp]
\caption{Computing $S_{\psi}$ for $L=2,3,\cdots,K$ using Dynamic Programming}
\label{alg:S}
\begin{algorithmic}
\STATE {\bfseries Input:} number of answers $K$; prior $(p_1,\cdots,p_K)$; aggregation parameter $L$; cutoff count $v_d$; residual count $\bar{n}_L$; injective mapping $\psi \in \mathfrak{S}_{L(n)}$
 \STATE Let $I_{\psi} \gets \{\psi(1),\cdots,\psi(L(n))\}$; $J_{\psi_{L-1}} \gets [K]\setminus I_{\psi}$; $R \gets \min\{v_d,\bar{n}_L\}$
    \STATE Initialize coefficient array $A[0,\cdots,\bar{n}_L]$ by $A[0] \gets 1$ and $A[t]\gets 0$ for all $t=1,\cdots,\bar{n}_L$
\FOR{each $j \in J_{\psi_{L-1}}$}
      \STATE Compute $g_j[r] \gets p_j^{r}/r!$ for all $r=0,\cdots,R$ \hfill // coefficients of $G_j(z)=\sum_{r=0}^{R}(p_j z)^r/r!$
      \FOR{$t=\bar{n}_L$ \textbf{down to} $1$}
         \STATE $s \gets 0$
         \STATE $r_{\min} \gets \min\{R,t\}$
         \FOR{$r=0$ \textbf{to} $r_{\min}$}
            \STATE $s \gets s + A[t-r]\cdot g_j[r]$
         \ENDFOR
         \STATE $A[t] \gets s$ \hfill // $A[t]$ now stores $[z^t]\prod_{k \in J_{\psi}} G_k(z)$
      \ENDFOR
   \ENDFOR
   \STATE $S_{\psi} \gets \bar{n}_L!\cdot A[\bar{n}_L]$ \hfill // implements $S_{\psi} = \bar n_L!\,[z^{\bar n_L}]\prod_{j\in J_{\psi}} G_j(z)$
   \STATE {\bfseries Output:} $S_{\psi}$
\end{algorithmic}
\end{algorithm}

Before illustrating the idea of efficiently calculating $\tilde{S}_{\psi}$ in \cref{alg:Stilde}, we first introduce \cref{alg:S} which is a simpler case for calculating ${S}_{\psi}$. Note that ${S}_{\psi}$ can be also expressed as:
\begin{align}
S_{\psi} & = \sum_{\mathbf{r}^{-\psi}} \dfrac{\bar{n}_L!}{\prod_{j \in [K] \setminus \psi} r_{j}!} \prod_{j \in [K] \setminus \psi} p_j^{r_j} \nonumber \\
  & = \bar{n}_L!\left[z^{\bar{n}_L}\right]\left(\prod_{j \in [K] \setminus \psi_{L-1}} \sum_{r=0}^{\min \left\{n_{L-1}, \bar{n}_L\right\}} \frac{(p_j z)^r}{r!}\right) \label{poly_form},
\end{align}
where $[z^{\bar{n}_L}]$ denotes the coefficient of $z^{\bar{n}_L}$. \cref{alg:S} efficiently computes this coefficient via dynamic programming by iteratively convolving the coefficients of the generating function in (\ref{poly_form}). Instead of enumerating all possible count partitions, which is computationally prohibitive, the algorithm treats the summation as a coefficient extraction problem from the product of exponential generating functions associated with each label. By iteratively performing discrete convolution on the coefficient arrays, \cref{alg:S} computes the probability mass for the residual count $\bar{n}_L$ in polynomial time.

Then we turn to our \cref{alg:Stilde}: Based on \cref{alg:S}, the main idea (difference) in \cref{alg:Stilde} is that we introduce a dummy variable $u$ to indicate that whether there exists label that is assigned with $v_d$. For each label $j \in J_{\psi}$ in the tail: we define the generating function 
\begin{align*}
G_j(z,u)=\sum_{r=0}^{v_d-1}\frac{(p_j z)^r}{r!} + u\cdot \frac{(p_j z)^{v_d}}{v_d!}.
\end{align*}
Therefore, we have $[u^m z^{\bar n_L}] \prod_{j\in J_\psi} G_j(z,u)$ denoting the coefficients for the case where $\sum r_j = \bar{n}_L$ and the number of labels in the tail that hit the boundary $v_d$.

We further introduce the 2D table $A[m][t]$ that stores the coefficient that the total count of the tail is $t$ (corresponding to $[z^t]$) and the number of boundary hit is $m$ (corresponding to $[u^m]$), and $B[m][t]$ is the 2D table that serves as the cache for $A$ used for updating the next label. We first update $B$ until the total count of the tail is up to $v_d-1$. Then, for the boundary hit case, where the total tail count is $v_d$, we further update $B[m+1][t+v_d]$ to increase the boundary hit count $m$, which corresponds to the term $u\cdot \frac{(p_j z)^{v_d}}{v_d!}$ in the generating function $G_j(z, u)$. The total computational complexity of these algorithms can be derived by counting the number of iterations directly.


\newpage
\section{Appendix: Numerical Details}\label{appendix:numerical}

All experiments were conducted locally on a laptop with Apple M2 chip, 8-core ARM64 CPU, 8 GB memory in Python 3.7.


We provide some additional results for synthetic datasets with $\pi_1 = (0.6,0.1,0.1,0.1,0.05,0.05)$ for $K=6$; and $\pi_2 = (0.5, 0.2,0.2,0.05,0.05)$ for $K=5$. The following two tables provide results parallel to that in Table \ref{tab:threshold_comparison_L234}.

\begin{table*}[htbp]
\centering
\caption{Comparison across $L\in\{2,3,4\}$ over different thresholds $1-\delta$ with prior $\pi_1$}
\small
\setlength{\tabcolsep}{2.5pt}
\renewcommand{\arraystretch}{1.1}
\resizebox{\textwidth}{!}{%
\begin{tabular}{c cc cc cc cc cc cc}
\toprule
& \multicolumn{2}{c}{$1-\delta=0.7$} & \multicolumn{2}{c}{$1-\delta=0.8$} & \multicolumn{2}{c}{$1-\delta=0.9$} & \multicolumn{2}{c}{$1-\delta=0.95$} & \multicolumn{2}{c}{$1-\delta=0.975$} & \multicolumn{2}{c}{$1-\delta=0.99$} \\
\cmidrule(lr){2-3}\cmidrule(lr){4-5}\cmidrule(lr){6-7}\cmidrule(lr){8-9}\cmidrule(lr){10-11}\cmidrule(lr){12-13}
Method
& Mode Acc. & Num. Gen.
& Mode Acc. & Num. Gen.
& Mode Acc. & Num. Gen.
& Mode Acc. & Num. Gen.
& Mode Acc. & Num. Gen.
& Mode Acc. & Num. Gen. \\
\midrule
$L=2$ & 86.8\% & 3.44 & 87.6\% & 3.59 & 93.3\% & 4.68 & 97.8\% & 6.37 & 98.8\% & 7.42 & 99.5\% & 8.87 \\
$L=3$ & 83.9\% & 3.07 & 87.1\% & 3.48 & 93.7\% & 4.63 & 96.8\% & 5.68 & 99.0\% & 7.36 & 99.4\% & 7.98 \\
$L=4$ & 83.9\% & 3.07 & 87.0\% & 3.46 & 93.7\% & 4.63 & 96.9\% & 5.71 & 99.0\% & 7.36 & 99.4\% & 7.98 \\
Exact & 83.9\% & 3.07 & 87.0\% & 3.46 & 93.7\% & 4.63 & 96.8\% & 5.68 & 99.0\% & 7.36 & 99.4\% & 7.99 \\
ASC & 60.6\% & 1.00 & 94.2\% & 5.16 & 99.5\% & 9.73 & 99.9\% & 13.36 & 100.0\% & 16.90 & 100.0\% & 22.18 \\
\bottomrule
\end{tabular}%
}
\label{tab:threshold_comparison_L234_pi_1}
\end{table*}

\begin{table*}[htbp]
\centering
\caption{Comparison across $L\in\{2,3,4\}$ over different thresholds $1-\delta$ with prior $\pi_2$}
\small
\setlength{\tabcolsep}{2.5pt}
\renewcommand{\arraystretch}{1.1}
\resizebox{\textwidth}{!}{%
\begin{tabular}{c cc cc cc cc cc cc}
\toprule
& \multicolumn{2}{c}{$1-\delta=0.7$} & \multicolumn{2}{c}{$1-\delta=0.8$} & \multicolumn{2}{c}{$1-\delta=0.9$} & \multicolumn{2}{c}{$1-\delta=0.95$} & \multicolumn{2}{c}{$1-\delta=0.975$} & \multicolumn{2}{c}{$1-\delta=0.99$} \\
\cmidrule(lr){2-3}\cmidrule(lr){4-5}\cmidrule(lr){6-7}\cmidrule(lr){8-9}\cmidrule(lr){10-11}\cmidrule(lr){12-13}
Method
& Mode Acc. & Num. Gen.
& Mode Acc. & Num. Gen.
& Mode Acc. & Num. Gen.
& Mode Acc. & Num. Gen.
& Mode Acc. & Num. Gen.
& Mode Acc. & Num. Gen. \\
\midrule
$L=2$ & 77.8\% & 5.63 & 86.1\% & 8.50 & 93.4\% & 13.31 & 96.6\% & 17.11 & 98.1\% & 20.22 & 99.3\% & 24.51 \\
$L=3$ & 78.1\% & 5.52 & 84.4\% & 7.57 & 91.9\% & 11.30 & 96.2\% & 15.30 & 98.5\% & 19.50 & 99.5\% & 23.45 \\
$L=4$ & 78.1\% & 5.50 & 84.9\% & 7.70 & 92.3\% & 11.74 & 96.2\% & 15.30 & 98.5\% & 19.45 & 99.5\% & 23.45 \\
Exact & 78.1\% & 5.50 & 84.9\% & 7.70 & 92.2\% & 11.70 & 96.2\% & 15.30 & 98.5\% & 19.45 & 99.5\% & 23.45 \\
ASC & 50.4\% & 1.00 & 82.1\% & 7.61 & 96.5\% & 19.37 & 99.2\% & 29.12 & 99.9\% & 39.32 & 100.0\% & 52.76 \\
\bottomrule
\end{tabular}%
}
\label{tab:threshold_comparison_L234_pi_2}
\end{table*}

We also provide additional results for real-world data in \textsc{FEval-TTC} on different LLM models and on different datasets (DisambiguationQA, GSM8K). The results are presented as follows. Note that there may exist slight deficit in Mode Acc. that does not reach the threshold $1-\delta$, which is due to the cutoff effect (as we only sample 100 answers for each question). Also notice that there might exist some cases where the efficiency of $L=2$ even slightly better than $L=3$ (in Table \ref{tab:comparison_results_disambiguationqa_merged}), which is again a consequence of the limited sampling size, as the finite number of generations (of 100 answers) can lead to a slight empirical bias.

\begin{table*}[htbp]
\centering
\caption{Comparison across methods and models under Dataset DisambiguationQA.}
\small
\setlength{\tabcolsep}{2.5pt}
\renewcommand{\arraystretch}{1.1}
\resizebox{\textwidth}{!}{%
\begin{tabular}{ll ccc ccc ccc}
\toprule
& & \multicolumn{3}{c}{$1-\delta=0.95$} & \multicolumn{3}{c}{$1-\delta=0.975$} & \multicolumn{3}{c}{$1-\delta=0.99$} \\
\cmidrule(lr){3-5}\cmidrule(lr){6-8}\cmidrule(lr){9-11}
Model & Method
& Ans. Acc. & Mode Acc. & Num. Gen.
& Ans. Acc. & Mode Acc. & Num. Gen.
& Ans. Acc. & Mode Acc. & Num. Gen. \\
\midrule
\multirow{5}{*}{Qwen-2.5-72B} 
& $L=2$ (known prior)     & 88.5\% & 97.9\% & 4.93 & 89.1\% & 98.4\% & 5.54 & 89.1\% & 98.4\% & 6.11 \\
& $L=3$ (known prior)     & 88.5\% & 97.9\% & 4.93 & 89.1\% & 98.4\% & 5.48 & 89.1\% & 98.4\% & 6.09 \\
& $L=2$ (uncertain prior) & 87.7\% & 97.6\% & 3.45 & 87.7\% & 97.6\% & 4.23 & 88.3\% & 99.2\% & 7.37 \\
& $L=3$ (uncertain prior) & 87.7\% & 97.6\% & 3.41 & 87.7\% & 97.6\% & 4.19 & 88.3\% & 99.2\% & 7.44 \\
& ASC                     & 88.0\% & 100.0\% & 6.19 & 88.0\% & 100.0\% & 7.67 & 88.0\% & 100.0\% & 9.64 \\
\midrule
\multirow{5}{*}{GPT-4o mini} 
& $L=2$ (known prior)     & 77.9\% & 98.7\% & 7.65 & 78.1\% & 99.5\% & 8.90 & 78.1\% & 99.5\% & 10.55 \\
& $L=3$ (known prior)     & 77.9\% & 98.7\% & 7.65 & 78.1\% & 99.5\% & 8.90 & 78.1\% & 99.5\% & 10.55 \\
& $L=2$ (uncertain prior) & 77.3\% & 97.6\% & 7.24 & 77.1\% & 98.4\% & 9.46 & 77.9\% & 99.2\% & 14.14 \\
& $L=3$ (uncertain prior) & 77.3\% & 97.6\% & 7.33 & 77.1\% & 98.4\% & 9.48 & 77.9\% & 99.2\% & 14.21 \\
& ASC                     & 78.7\% & 100.0\% & 8.05 & 78.7\% & 100.0\% & 10.24 & 78.7\% & 100.0\% & 12.56 \\
\midrule
\multirow{5}{*}{LLaMA-3.1-405B} 
& $L=2$ (known prior)     & 75.7\% & 97.6\% & 3.87 & 76.3\% & 99.2\% & 4.78 & 76.3\% & 99.2\% & 5.68 \\
& $L=3$ (known prior)     & 76.0\% & 97.3\% & 3.60 & 76.3\% & 99.2\% & 4.74 & 76.3\% & 99.2\% & 5.63 \\
& $L=2$ (uncertain prior) & 76.5\% & 97.9\% & 3.69 & 76.5\% & 97.9\% & 4.06 & 76.5\% & 98.9\% & 6.76 \\
& $L=3$ (uncertain prior) & 76.8\% & 97.6\% & 3.35 & 76.8\% & 97.6\% & 3.50 & 76.5\% & 98.9\% & 6.59 \\
& ASC                     & 76.0\% & 100.0\% & 7.49 & 76.0\% & 100.0\% & 9.49 & 76.0\% & 100.0\% & 11.83 \\
\bottomrule
\end{tabular}%
}
\label{tab:comparison_results_disambiguationqa_merged}
\end{table*}

\begin{table*}[htbp]
\centering
\caption{Comparison across methods and models under Dataset GSM8K.}
\small
\setlength{\tabcolsep}{2.5pt}
\renewcommand{\arraystretch}{1.1}
\resizebox{\textwidth}{!}{%
\begin{tabular}{ll ccc ccc ccc}
\toprule
& & \multicolumn{3}{c}{$1-\delta=0.95$} & \multicolumn{3}{c}{$1-\delta=0.975$} & \multicolumn{3}{c}{$1-\delta=0.99$} \\
\cmidrule(lr){3-5}\cmidrule(lr){6-8}\cmidrule(lr){9-11}
Model & Method
& Ans. Acc. & Mode Acc. & Num. Gen.
& Ans. Acc. & Mode Acc. & Num. Gen.
& Ans. Acc. & Mode Acc. & Num. Gen. \\
\midrule
\multirow{5}{*}{Qwen-2.5-72B} 
& $L=2$ (known prior)     & 97.1\% & 99.8\% & 2.03 & 97.2\% & 99.8\% & 2.19 & 97.2\% & 99.8\% & 2.51 \\
& $L=3$ (known prior)     & 97.1\% & 99.7\% & 2.00 & 97.2\% & 99.8\% & 2.15 & 97.2\% & 99.8\% & 2.44 \\
& $L=2$ (uncertain prior) & 96.3\% & 98.2\% & 1.00 & 96.3\% & 98.2\% & 1.00 & 97.2\% & 100.0\% & 3.37 \\
& $L=3$ (uncertain prior) & 96.3\% & 98.2\% & 1.00 & 96.3\% & 98.2\% & 1.00 & 97.2\% & 100.0\% & 2.95 \\
& ASC                     & 97.2\% & 100.0\% & 4.58 & 97.2\% & 100.0\% & 5.61 & 97.2\% & 100.0\% & 6.91 \\
\midrule
\multirow{5}{*}{GPT-4o mini} 
& $L=2$ (known prior)     & 96.6\% & 99.4\% & 3.53 & 96.6\% & 99.4\% & 3.81 & 96.4\% & 99.7\% & 4.04 \\
& $L=3$ (known prior)     & 96.6\% & 99.4\% & 3.42 & 96.7\% & 99.5\% & 3.73 & 96.4\% & 99.7\% & 3.94 \\
& $L=2$ (uncertain prior) & 94.3\% & 96.5\% & 1.00 & 96.6\% & 99.1\% & 3.98 & 96.6\% & 99.0\% & 4.41 \\
& $L=3$ (uncertain prior) & 94.3\% & 96.5\% & 1.00 & 96.6\% & 99.0\% & 3.49 & 96.6\% & 99.0\% & 3.80 \\
& ASC                     & 96.8\% & 99.6\% & 5.33 & 96.8\% & 99.6\% & 6.64 & 96.4\% & 100.0\% & 8.27 \\
\midrule
\multirow{5}{*}{LLaMA-3.1-405B} 
& $L=2$ (known prior)     & 97.0\% & 99.8\% & 1.95 & 97.0\% & 99.8\% & 2.09 & 97.2\% & 100.0\% & 2.31 \\
& $L=3$ (known prior)     & 97.0\% & 99.8\% & 1.88 & 97.1\% & 99.9\% & 2.04 & 97.2\% & 100.0\% & 2.25 \\
& $L=2$ (uncertain prior) & 96.2\% & 97.7\% & 1.00 & 97.2\% & 99.7\% & 2.49 & 97.2\% & 99.7\% & 2.70 \\
& $L=3$ (uncertain prior) & 96.2\% & 97.7\% & 1.00 & 97.2\% & 99.7\% & 2.42 & 97.2\% & 99.7\% & 2.64 \\
& ASC                     & 97.2\% & 100.0\% & 4.50 & 97.2\% & 100.0\% & 5.78 & 97.2\% & 100.0\% & 7.02 \\
\bottomrule
\end{tabular}%
}
\label{tab:comparison_results_gsm8k_merged}
\end{table*}

\end{document}